\documentclass[sn-basic,iicol]{sn-jnl}%
\usepackage{adjustbox}
\usepackage{graphicx}%
\usepackage{multirow}%
\usepackage{amsmath,amssymb,amsfonts,bm}%
\usepackage{amsthm}%
\usepackage{mathrsfs}%
\usepackage[title]{appendix}%
\usepackage{xcolor}%
\usepackage{textcomp}%
\usepackage{manyfoot}%
\usepackage{booktabs}%
\usepackage{algorithm}%
\usepackage{algorithmic}%
\usepackage{subfigure}
\usepackage{listings}%
\usepackage{soul}
\usepackage{color}
\usepackage{natbib}
\usepackage{pifont}

\definecolor{bblue}{RGB}{0,0,200}
\definecolor{blue}{RGB}{0,0,255}
\definecolor{bred}{RGB}{200,0,0}
\definecolor{bgreen}{RGB}{0,140,0}
\hypersetup{
    colorlinks=true,
    linkcolor=bred,
    citecolor=bblue,
    filecolor=mydarkblue,
    urlcolor=blue
}

\newcommand{\KL}{D_{\mathrm{KL}}}
\newcommand{\E}{\mathbb{E}}

\DeclareMathOperator*{\argmin}{arg\,min}
\newtheorem{theorem}{Theorem}
\newtheorem{lemma}{Lemma}

\begin{document}

\title[Article Title]{Article Title}

\title[Article Title]{Variational Rectification Inference for Learning with Noisy Labels}

\author[1]{\fnm{Haoliang} \sur{Sun}}\email{haolsun@sdu.edu.cn}
\equalcont{Equal contribution}
\author[2]{\fnm{Qi} \sur{Wei}}\email{1998v7@gmail.com}
\equalcont{Equal contribution}
\author[2]{\fnm{Lei} \sur{Feng}}\email{feng0093@e.ntu.edu.sg}
\author*[1]{\fnm{Yupeng} \sur{Hu}}\email{huyupeng@sdu.edu.cn}
\author[3]{\fnm{Fan} \sur{Liu}\email{liufancs@gmail.com}}
\author[4]{\fnm{Hehe} \sur{Fan}\email{hehefan@zju.edu.cn}}
\author[1]{\fnm{Yilong} \sur{Yin}}\email{ylyin@sdu.edu.cn}

\affil[1]{\orgdiv{School of Software}, \orgname{Shandong University}, \orgaddress{\city{Jinan}, \country{China}}}

\affil[2]{\orgdiv{School of Computer Science and Engineering}, \orgname{Nanyang Technological University}, \orgaddress{\country{Singapore}}}

\affil[3]{\orgdiv{School of Computing}, \orgname{National University of Singapore}, \orgaddress{\country{Singapore}}}

\affil[4]{\orgdiv{School of Computer Science and Technology}, \orgname{Zhejiang University}, \orgaddress{\country{Hangzhou}, \country{China}}}

%%==================================%%
%% sample for unstructured abstract %%
%%==================================%%

\abstract{Label noise has been broadly observed in real-world datasets. To mitigate the negative impact of overfitting to label noise for deep models, effective strategies (\textit{e.g.}, re-weighting, or loss rectification) have been broadly applied in prevailing approaches, which have been generally learned under the meta-learning scenario. Despite the robustness of noise achieved by the probabilistic meta-learning models, they usually suffer from model collapse that degenerates generalization performance. In this paper, we propose variational rectification inference (VRI) to formulate the adaptive rectification for loss functions as an amortized variational inference problem and derive the evidence lower bound under the meta-learning framework. Specifically, VRI is constructed as a hierarchical Bayes by treating the rectifying vector as a latent variable, which can rectify the loss of the noisy sample with the extra randomness regularization and is, therefore, more robust to label noise. To achieve the inference of the rectifying vector, we approximate its conditional posterior with an amortization meta-network. By introducing the variational term in VRI, the conditional posterior is estimated accurately and avoids collapsing to a Dirac delta function, which can significantly improve the generalization performance. The elaborated meta-network and prior network adhere to the smoothness assumption, enabling the generation of reliable rectification vectors. Given a set of clean meta-data, VRI can be efficiently meta-learned within the bi-level optimization programming. Besides, theoretical analysis guarantees that the meta-network can be efficiently learned with our algorithm. Comprehensive comparison experiments and analyses validate its effectiveness for robust learning with noisy labels, particularly in the presence of open-set noise. }

\keywords{Learning with Noisy Labels, Meta-learning, Variational Inference, Loss Correction.}

\maketitle

%\clearpage
\section{Introduction}\label{sec:introduction}
Learning from noisy labels (LNL) \citep{fu2024noise, xia2023combating,yuan2023late,huang2023paddles,wei2023fine,xu2021faster,ortego2021multi,gudovskiy2021autodo} poses great challenges for training deep models, whose performance heavily relies on large-scaled labeled datasets. Annotating training data with high confidence would be resource-intensive, especially for some domains with ambiguous labels, such as medical image segmentation and re-identification tasks \citep{pu2023memorizing, liu2023mitigating}. In this case, label noise would inevitably arise since there is usually a lack of experts for accurate annotation. 

Re-weighting~\citep{kumar2010self,zadrozny2004learning,jiang2018Mentornet,shu2023cmw} and loss rectification~\citep{zhangyiwan2021learning,vahdat2017toward,Yu2020_dual} are two effective strategies to reduce the bias of learning caused by noisy labels. The basic idea is to construct a weight function or transition matrix to mitigate the effect of noisy samples. Although those strategies have been broadly applied, there are two limitations. 1) The form of the weighting functions needs to be manually specified under certain assumptions on the data distribution, restricting its expandability in the real world~\citep{shu2019meta}. 2) Hyper-parameters in these functions are usually tuned by cross-validation, which suffers from the issue of scalability~\citep{franceschi18a}.

A family of approaches based on meta-learning has been recently proposed for noisy labels~\citep{shu2023cmw,xu2021faster,zheng2021meta,zhang2019metacleaner,shu2019meta,zhao2021probabilistic,sun2021learning,yichen2020softlabel}. By introducing a small meta-data set with completely clean labels, an effective weighting (\textit{e.g.}, meta-weight-net~\citep{shu2019meta}) or correction (\textit{e.g.}, meta label correcter~\citep{zheng2021meta}) function can be meta-learned under the meta-learning scenario, omitting the prior assumption for these functions and avoiding manually tuning of hyper-parameters~\citep{ren2018learning}. %Therefore, learning an effective meta function can significantly improve robustness of the learned prediction model
To enhance the interpretability and generalization ability, Bayesian meta-learning~\citep{zhao2021probabilistic,sun2021learning} has been applied to model the uncertainty of parameters and achieved a favorable performance for learning with noisy labels. The probabilistic meta-weight-net~\citep{zhao2021probabilistic} applies a Bayesian weight network to estimate the distribution of the sample weight. The probabilistic formulation is elegant. However, the weighting network merely takes the loss as the input to compute the sample weight, it would be deficient in controlling the learning process and result in low expression capability~\citep{sun2021learning}. To strengthen the capability of the meta-network, a rectification network has been proposed in~\citep{sun2021learning} to achieve rectifying the training process with an estimated vector. By treating the rectifying vector as a latent variable, the predictive posterior can be estimated by Monte-Carlo (MC) approximation. %Although the MC approximation has achieved desirable effectiveness for rectifying the bias of the learning process, we have observed that there would exist model collapse~\citep{iakovleva2020meta} where the conditional prior collapses to a Dirac delta function and the model degenerates to a deterministic parameter generating network, especially for a small sampling number in MC. This collapse may degrade the generalization performance of the model. %, which is illustrated in Figure \ref{fig:intro}.
Although the probabilistically formulated LNL rectification method has demonstrated effectiveness, there are two issues in existing methods: 1) Model collapse, where it has been observed that the conditional prior may collapse to a Dirac delta function, and the model degenerates to a deterministic parameter-generating network, especially for a small sampling number in MC~\citep{iakovleva2020meta}. This collapse can degrade the model's generalization performance. 2) Overlooking the intrinsic smoothness assumption in data, where the meta-network should be primarily aware of discriminative information from the feature rather than rely on the potentially noisy label, especially when the discriminative information in the feature is inconsistent with the label.

%%%%%%%%%%%%%%%%%%%%%%%%%%%%%%%%%%%%%%%%%%%%%%%%%%%%%%%%%
\iffalse

\begin{figure}[t]
\centering
\includegraphics[width=1.0\linewidth]{ intro.pdf} 
\caption{Illustration of model collapse with 70\% uniform noise. There exists a gap between MC and VRI in the meta-loss curve. The norm of variance for the rectification vector of MC degenerates into zero in some cases. The generalization performance is also degraded.}
\label{fig:intro}
\end{figure}

\fi
%%%%%%%%%%%%%%%%%%%%%%%%%%%%%%%%%%%%%%%%%%%%%%%%%%%%%%%%%%

In this work, to tackle the two issues in existing works, we propose to formulate learning rectification process as an amortized variational inference problem and derive the evidence lower bound (ELBO) under the meta-learning framework. We construct variational rectification inference (VRI) to achieve an adaptively rectifying learning process for noisy labels as shown in Figure \ref{fig:demo}. We treat the rectifying vector as a latent variable and build a hierarchical Bayes under the setting of the meta-learning scenario. We introduce an amortization meta-network to estimate the posterior distribution of the rectifying vector and achieve a rectified prediction via Monte Carlo sampling. 
The proposed meta-network is built to leverage the feature embedding and corresponding label as inputs, which can faithfully exploit sufficient information lying in the feature space and significantly improve the generalization performance of the classification network. 

By building a variational term with a prior network to constraint the posterior, VRI can avoid the model collapse in MC approximation with limited samples and further enhance the capability of inference for the unbiased estimation of the predictive posterior. By incorporating a prior network with the input of feature embedding and minimizing the variational term, the meta-network can effectively acquire discriminative knowledge from the feature and generate reliable rectification vectors that adhere to the smoothness assumption. VRI can be integrated into the meta-learning framework to achieve adaptive rectification for noisy samples. By introducing the meta-data, we conduct the meta-learning process with a bi-level programming schema and achieve robust learning with label noise. %Unlike those label correction methods, VRI employs a vector to rectify the learning process of the classification network, enabling it to handle open-set label noise effectively.

\begin{figure}[t]
\centering
\includegraphics[width=0.48\textwidth]{ 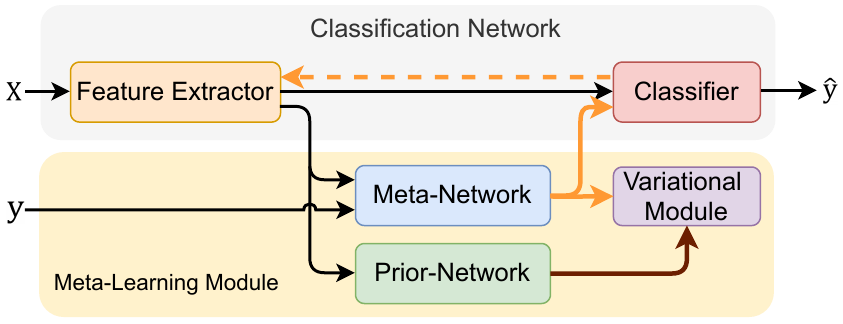}
\vspace{-3mm}
\caption{The meta-network can generate the rectifying vector to integrate into the inference of the classification network. The variational module can avoid model collapse via a prior network.}
\label{fig:demo}
\end{figure}

Our contributions can be summarized in four aspects. 
\begin{itemize}
\setlength{\itemsep}{0pt}
\setlength{\parsep}{0pt}
\setlength{\parskip}{0pt}
  \item  We formulate the learning rectification process as an amortized variational inference problem and derive a new versatile ELBO objective for LNL in the context of meta-learning. 
   \item We introduce a regularization term based on KL divergence, which can facilitate the development of a non-degenerate prior and prevent model collapse in MC approximation.
   \item We elaborate a meta-network and prior network that adhere to the smoothness assumption, enabling the generation of reliable rectification vectors.  
   \item We propose a new bi-level optimization algorithm for the objective and provide a theoretical guarantee for its convergence rate.

\end{itemize}

We carry out comprehensive experiments on five challenging benchmark datasets with various types of noise. Our VRI method surpasses state-of-the-art approaches in most scenarios, especially when dealing with open-set noise. Further promising results and additional complementary analysis also underscore the effectiveness of VRI.

The rest of this paper is organized as follows. Section \ref{sec:related} introduces related works and discusses the relations to our work. Section \ref{sec:method} includes the problem setting, preliminaries, and our noise-robust method Variational Rectification Inference (VRI). We also provide the theoretical provide for VRI. Section \ref{sec:experiments} reports experimental results on three noise types and various datasets. We test the performance of VRI on the restricted scenario (\textit{i.e.} training without the meta-set).  Finally, Section \ref{sec:conclusions} gives a conclusion.

\section{Related Work}\label{sec:related}
\textbf{Re-weighting}. The main idea of the sample re-weighting strategy is to assign a small weight to samples with corrupted labels \citep{shu2019meta}. \citep{liu2015classification} provide a theoretical guarantee that any surrogate loss function can benefit from the importance reweighting of samples and propose a new strategy to estimate the noisy ratio. Since the clean example usually has a small loss and deep models can memorize them at the beginning of the training steps~\citep{arpit2017closer}, samples with the lower loss are selected for learning at each epoch in~\citep{shen2019learning,cui2019class}. 
Based on this assumption, MentorNet~\citep{jiang2018Mentornet} adopts the idea of curriculum learning to train a mentor network to guide the learning of the student classification network.
A Bayesian model~\citep{wang2017robust} has also been extended to infer the latent variables of sample weights for handling label noise. 
To avoid manually designing or tuning weighting functions, meta-learning has been introduced to learn to generate weights from a meta-data set with clean labels. The pioneering work, inspired by the two nested loops of optimization~\citep{finn2017model}, sets the weight value as trainable parameters~\citep{ren2018learning} and achieves a dynamic weighting strategy. Meta-Weight-Net~\citep{shu2019meta} further improves the scalability of the weighting space by directly generating weights via an MLP and being learned under the meta-learning scenario.

\textbf{Correcting}. There are plenty of methods working on loss or label correction of the objective function, which can be essentially categorized into three aspects. 1) A confusion matrix~\citep{sukhbaatar2015training, bo2018masking, tanno2019learning, Yu2020_dual}, restoring the transition probability between the true label and the noisy one, is estimated and multiplied to the prediction vector. This can be considered as a smooth regularization for the prediction to mitigate the impact of corrupted labels. The following works~\citep{hendrycks2018using,pereyra2017regularizing} introduce a set of clean anchor-data to improve the estimation accuracy of the confusion matrix. Recently, an MC approximation framework~\citep{sun2021learning} is proposed to learn to generate the rectification vector for loss functions, demonstrating the superiority of handling the sample ambiguity in noisy data.
2) Another family of methods, such as Reed~\citep{reed2014training}, D2L~\citep{ma2018dimensionality}, S-Model~\citep{goldberger2016training}, includes extra inference steps to correct corrupted labels for the following optimization. By leveraging clean meta-data, MSLC~\citep{yichen2020softlabel,zheng2021meta} learns an efficient label corrector to reduce label noise. 
3) Designing appropriate loss functions also provides an effective solution to significantly enhance the robustness of deep models. Noise-tolerant losses, such as mean absolute error (MAE), have been theoretically analyzed for noisy labels in~\citep{ghosh2017robust}.
The following works~\citep{zhang2018generalized,wang2019improving} further improve the performance of MAE on challenging datasets with generalized MAE and cross-entropy losses. 
Recently, a dynamically weighted bootstrapping loss~\citep{arazo2019unsupervised} has been designed for noisy samples based on an unsupervised beta mixture model.

\textbf{Noise confusion matrix}. The confusion matrix is commonly used for label correction \citep{cheng2022class,li2022estimating,yao2021instance}. The key aspect of confusion matrix methods involves estimating the confusion matrix $T$, which largely depends on the estimated noisy class posterior. To mitigate the negative impact of inaccurate noisy class posterior estimates on $T$, \citep{cheng2022class} introduced a forward-backward cycle-consistency regularization for improved estimation of the confusion matrix.  \citep{yao2021instance} employed a causal graph to enhance the identifiability of matrix $T$, aiding in the inference of clean labels. Additionally, the study proposed by \citep{li2022estimating} explores noisy multi-label learning and introduces a novel estimator that utilizes label correlations effectively, performing well without the need for anchor points or precise fitting of the noisy class posterior.

\textbf{Meta-learning}, leverages shared knowledge among a series of tasks to improve the performance of the current task, which has made great breakthroughs recently~\citep{hospedales2021meta}. The typical idea is to parameterize a trainable function as the meta-learner to generate the parameters or statistics for base learners, which can be regarded as the "black-box" adaptation. By introducing the clean meta-data set, the aforementioned strategies (\textit{e.g.}, re-weighing~\citep{ren2018learning,zhao2021probabilistic,shu2019meta,shu2023cmw} or loss correction~\citep{zhang2019metacleaner,yichen2020softlabel,zheng2021meta,sun2021learning}) can be meta-learn in a data-driven way, avoiding manually tuning hyper-parameters with the validation set in conventional methods~\citep{ren2018learning}. \citep{taraday2023enhanced} developed a teacher-student model that adheres to an advanced bi-level optimization process. Specifically, they formulated a more precise meta-gradient for teacher learning, while the teacher network produces refined soft labels for the student. \citep{zhang2021learning} crafted a meta-based re-weighting framework, introducing historical proxy reward data to lessen dependency on clean meta-data and employing feature sharing to decrease optimization costs. \citep{kye2022learning} combined the estimation of the transition matrix with a meta-optimization framework, facilitating label correction and enabling single back-propagation through a dual-head architecture.

\textbf{Variational inference}. In practical implementations, stochastic Monte Carlo or analytic approximations are commonly used methods in Bayesian models \citep{murphy2023probabilistic}. The Variational Auto-Encoder (VAE) \citep{kingma2013auto}, a pioneering generative model, employs variational inference (VI) \citep{shen2019unsupervised} for learning directed graphical models, achieving notable advancements in image generation \citep{zhu2022disentangled} and disentangled representation \citep{higgins2016beta}. Its application extends to weakly-supervised learning tasks. \citep{wang2017robust} treat example weights as latent variables and utilize automatic differentiation variational inference for weight inference, facilitating noise-tolerant learning through a reweighting strategy. The Probabilistic Meta-Weight-Net \citep{zhao2021probabilistic} employs a Bayesian weight network to estimate the distribution of sample weight and formulates the objective as a VI problem. Another advantage of the VI formulation is that it avoids the model collapse observed in the MC method, where the conditional prior of the parameter tends to collapse to a Dirac delta function when using a small number of samples for stochastic back-propagation \citep{iakovleva2020meta}.

\textbf{Semi-supervised learning} (SSL), builds a labeled set that contains confident examples by sample selection strategies and employs modern SSL techniques (e.g., FixMatch \citep{sohn2020fixmatch}, MixMatch \citep{berthelot2019mixmatch}, and other methods~\citep{Yang10536663}) to effectively leverage the labeled set and the remaining unlabeled set \citep{li2020dividemix,liu2020early,wei2020combating}. Compared with other branches, SSL-based methods have achieved state-of-the-art performance on image benchmarks since they can incorporate prior knowledge to exploit discriminative information from finite training samples. \textit{However, the data generative process has an impact on the performance of SSL methods} \citep{pmlr-v202-yao23a}. \textit{When the image feature is the cause of the label, the performance of SSL methods is worse than model-based methods, e.g., the method based on the confusion matrix} \citep{Yu2020_dual}.

%To be specific, given a selection criterion (e.g., small-loss), the label-corrupted training set would be split into a clean set and an unreliable (noisy) set. Then, unreliable samples' labels are discarded and subsequently input into a semi-supervised training framework (e.g., FixMatch \citep{sohn2020fixmatch} and MixMatch \citep{berthelot2019mixmatch}).

\vspace{1mm}
\textbf{Other methods}. 
Additional lines of methods for handling label noise include 1) data augmentation~\citep{zhang2017mixup, Nishi_2021_CVPR}, exploring different augmentation policies to mitigate the side-effect of noisy labels, 2)
sample selection~\citep{yu2019does,wei2022self,xia2023combating}, designing an effective selection strategy to select clean data from the noisy training set, 3) model regularization, ~\citep{liu2020early,kang2023unleashing}, combating noisy signal by regularizing model in the learning stage. For example, \citep{kang2023unleashing} reveal that integrating widely adopted regularization strategies, such as learning rate decay, model weight averaging, and data augmentation, can surpass the performance of state-of-the-art methods. 4)
Contrastive learning~\citep{wei2023fine,li2022selective}, combating noisy signal via enhancing representation ability of deep models. 

Recently, studies \citep{yao2021jo, sun2022pnp, xu2023usdnl, kang2023unleashing} have concentrated on open-set label noise, where the training set includes out-of-distribution samples. For instance, Jo-SRC \citep{yao2021jo} and PNP \citep{sun2022pnp} propose a method based on consistency to identify open-set examples and then mitigate their impact by removing them. USDNL \citep{xu2023usdnl} estimates the uncertainty of network predictions after early training using single dropout, then incorporates this into the cross-entropy loss and selects samples using the small-loss criterion.

\vspace{1mm}
\textbf{Relations to us}. In contrast to prevailing works, we formulate the rectification process as an amortized variational inference problem. By building a hierarchical Bayes model, VRI exhibits the favorable property of handling the sample ambiguity. The variational term in VRI can avoid the model collapse existing in those MC approximation methods. Unlike those label correction methods, our method, VRI, employs a vector to rectify the learning process of the classification network, enabling it to handle open-set label noise effectively.

\section{Method}\label{sec:method}

We propose variational rectification inference (VRI) for adaptively rectifying the learning processing under the setting of meta-learning, which effectively mitigates the side-effect of noisy labels. VRI includes a meta-network that generates a rectifying vector to support the learning of the classification network. The whole learning procedure is formulated as an amortized variational inference problem. We integrate VRI into the bi-level optimization steps and achieve meta-learning in the rectifying process. 

\subsection{Preliminaries}
\noindent\textbf{Robust Learning with Meta-Data}. 
Given the training set $\mathcal{D}_N = \{\mathbf{x}^{(i)},\mathbf{y}^{(i)}\} _{i = 1}^N$ with noisy labels, the aim for robust learning is to achieve good generalization performance on the clean testing set.  
Under the setting of meta-learning, we construct a set of clean examples $\mathcal{D}_M = \{ \tilde{\mathbf{x}}^{(i)},\tilde{\mathbf{y}}^{(i)}\} _{i = 1}^M$, regarded as the meta-data set, which is smaller than the training set $\mathcal{D}_N$ of $N \gg M$. We usually choose the validation set as the meta-data set in practice. Therefore, the meta-learning process can be considered as learning to tune the hyper-parameters in a data-driven way.

\vspace{1mm}
\noindent\textbf{Rectification for the loss function}. Loss rectification~\citep{hendrycks2018using, sun2021learning} is an effective tool for mitigating the effect of the label noise with meta-data. 
There are essentially two networks in the learning framework. The meta-network $V(\mathbf{y}^{(i)}, \mathbf{z}^{(i)}; \phi )$ with the parameter of $\phi$ is trained with the meta-data set to take the feature embedding $\mathbf{z}^{(i)}$ and label $\mathbf{y}^{(i)}$ of the example $(\mathbf{x}^{(i)},\mathbf{y}^{(i)})$ as input and generates a vector $\mathbf{v}^{(i)}$ to rectify the learning process of the classification network. Let $\odot$ denote the element-wise product. By multiplying $\mathbf{v}^{(i)}$ on the logits calculated from the classification network $\mathbf{v}^{(i)}  \odot F(\mathbf{x}^{(i)};\theta)$, the rectified loss with noisy labels can still produce effective update direction. Therefore, the negative impact from corrupted labels in the noisy training set can be mitigated.

\subsection{Variational Rectification Inference}
The inference process in our framework is built as a hierarchical Bayes model. From the probabilistic perspective, we treat the rectifying vector as the latent variable and compute the posterior distribution $p(\mathbf{v} | \mathbf{x},\mathbf{y})$ given the observation of the sample. Our goal of this task is to accurately approximate the conditional predictive distribution with parameters $\theta$ by maximizing its log-likelihood
\begin{equation}\label{eq:ppd}
\max \, \log \,  p_{\theta}(\mathbf{y}|\mathbf{x})  = \log \int p_{\theta}(\mathbf{y}| \mathbf{x},\mathbf{v})p(\mathbf{v}| \mathbf{x})\text{d}\mathbf{v}.
\end{equation}

The rectified learning process in this work consists of two steps. First, form the posterior distribution $p(\mathbf{v}| \mathbf{x})$ over $\mathbf{v}$ for each sample $(\mathbf{x},\mathbf{y})$. Then, calculate the posterior predictive $p_{\theta}(\mathbf{y} | \mathbf{x}, \mathbf{v})$. Since inferring the posterior $p(\mathbf{v} | \mathbf{x})$ is generally intractable, we resort to approximating it by leveraging a variational distribution $q_\phi(\mathbf{v}| \mathbf{x},\mathbf{y})$. We minimize the Kullback–Leibler (KL) divergence $\KL$ between $q_\phi(\mathbf{v}| \mathbf{x},\mathbf{y})$ and $p(\mathbf{v} | \mathbf{x},\mathbf{y})$ to obtain the variational distribution
\begin{equation}\label{eq:kl}
\min  \KL[q_\phi(\mathbf{v}| \mathbf{x},\mathbf{y}) || p_{\theta}(\mathbf{v} | \mathbf{x},\mathbf{y})].
\end{equation}

We can then derive the tractable evidence lower bound (ELBO) of the conditional predictive distribution to approximate the posterior $p(\mathbf{v} | \mathbf{x},\mathbf{y})$ by applying the Bayes' rule
\begin{equation}
\label{eq:elbo}
\begin{aligned}
  &\max \log \, p_{\theta}(\mathbf{y}|\mathbf{x}) \geq \mathcal{L}_{\text{ELBO}} \\ & = \mathop{\E}_{q_\phi(\mathbf{v}| \mathbf{x},\mathbf{y})} p_{\theta}(\mathbf{y}| \mathbf{x},\mathbf{v}) - \KL[q_\phi(\mathbf{v}| \mathbf{x},\mathbf{y}) || p_\omega(\mathbf{v}|\mathbf{x})].
\end{aligned}
%\vspace{-5mm}
\end{equation}

The first term of the ELBO is the predictive log-likelihood conditioned on the input $\mathbf{x}$ and the inferred rectifying vector $\mathbf{v}$. Maximizing it can achieve accurate rectified prediction for each sample. The second term is to minimize the discrepancy between the variational distribution $q_\phi(\mathbf{v}| \mathbf{x},\mathbf{y})$ and the prior $p_\omega(\mathbf{v}|\mathbf{x})$ assigned to a certain distribution form. The detailed derivation of the ELBO is provided in Appendix \ref{de_ELBO}. Once we obtain $q_\phi(\mathbf{v}| \mathbf{x},\mathbf{y})$, the inference procedure can be summarized as 1) forming the variational distribution $q_\phi(\cdot)$ on the fly with amortized variational inference (AVI); 2) calculating the posterior predictive distribution $p(\mathbf{y}|\mathbf{x}, \mathbf{v})$ via Monte Carlo estimation. 

From a learning perspective, the regularization term facilitates the development of a non-degenerate prior and prevents model collapse. Specifically, the KL term $\KL[q_\phi(\mathbf{v}| \mathbf{x},\mathbf{y}) || p_\omega(\mathbf{v}|\mathbf{x})]$ measures the divergence between the posterior $q_\phi$ and the prior $p_\omega$. Consider the scenario in MC approximation where the posterior $q_\phi$ converges to a Dirac delta, with the Gaussian variance approaching zero, while the prior remains unchanged. In this case, the KL divergence becomes significantly large and penalizes this situation, thereby preventing the model from becoming deterministic.

From a practical perspective, generating the reliable rectification vector can be aware of the \textit{smoothness assumption}. The purpose of the regularization term is to minimize the distance between generated distributions of the rectification vector given different inputs of $(\mathbf{x}, \mathbf{y})$ and $\mathbf{x}$. In other words, the output of the posterior given the feature and label $(\mathbf{x}, \mathbf{y})$ should be similar to the output of the prior given the feature $\mathbf{x}$. This implies that the meta-network should be primarily aware of discriminative information from the feature $\mathbf{x}$ rather than the potentially noisy label $\mathbf{y}$, when the discriminative information in the feature is inconsistent with the label. We can conclude that the smoothness assumption can guide reliable rectification.

\vspace{1mm}
\noindent\textbf{Application Details}.
In practice, we assume that the latent variable $\mathbf{v}$ obeys the factorized Gaussian distribution $\mathbf{v} \! \sim \! \mathcal{N}(\bm{\mu}, \bm{\sigma}^2)$. There are three networks in our framework. The classification network $F$ with parameters $\theta$ works on the basic categorizing task. We implement the variational distribution with an amortization meta-network $V$ with parameters $\phi$ that takes a pair of the feature embedding and label of the sample as input and outputs the parameters $(\bm{\mu}, \bm{\sigma}^2)$ of a factorized Gaussian distribution $q$. By sampling a vector $\mathbf{v}^{(i)}$ from $q$, $F$ can compute a rectified prediction $\hat{\mathbf{y}}^{(i)}$. The prior is also implemented as a network $H$ with parameters $\omega$ that takes the feature as inputs and outputs another factorized Gaussian distribution $p$. To enable an unbiased estimate of the objective in Eq. (\ref{eq:ppd}), we adopt the Monte Carlo Sampling strategy that repeats the above process multiple times and averages all predictions. Note that it is commonly intractable to back-propagate through sampling operations, we solve it by applying the reparameterization trick proposed in~\citep{kingma2013auto} as
\begin{equation}\label{eq:reparam}
\mathbf{v}=\bm{\mu} + \bm{\sigma} \cdot \bm{\epsilon} \quad \text{with} \;\; \bm{\epsilon} \sim \mathcal{N}(0 ,\text{I}).
\end{equation}
We denote $\text{RP}(\cdot)$ as the sampling operation with the reparameterization trick for simplicity in the following section.

\begin{figure}[t]
\centering 
\includegraphics[width=0.48\textwidth]{ 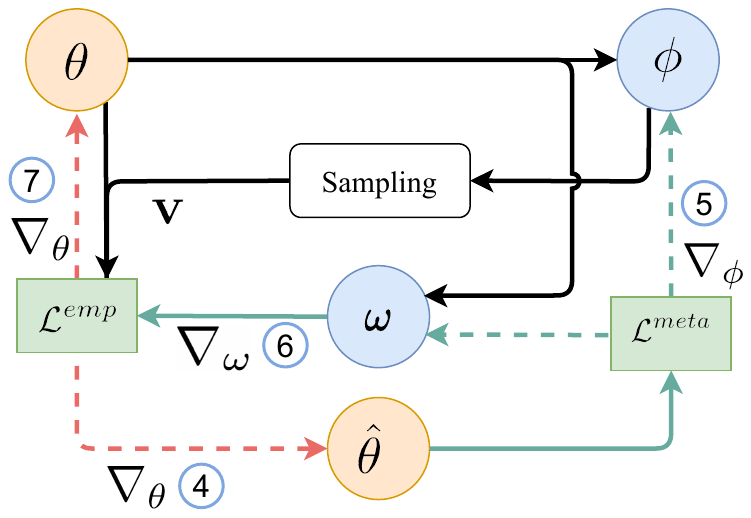} \vspace{-3mm}
\caption{Flowchart of the learning algorithm. The solid and dashed lines denote forward and backward propagation, respectively. For each iteration, the meta-network $\phi$ generates the distribution of $\mathbf{v}$ and then produces multiple examples via the sampling module to estimate the predictive distribution. By computing the gradient through the update step 4, the meta-network can be trained in step 5. The prior network is also jointly optimized in step 6. The classification network $\theta$ will be updated with support of the learned meta-network in step 7.}  
\label{fig:fig3}
\end{figure}

\subsection{Meta-Learning Process}
We present the practical objective function to achieve jointly learning the three networks of $F_\theta(\cdot)$, $V_\phi(\cdot)$, and $H_\omega(\cdot)$. By formulating the problem as a meta-learning task, we conduct bi-level optimization programming to solve it. The exhaustive derivation for each updating step is also provided in the following. 

\subsubsection{The practical objectives}
We derive the practical objective from the ELBO in Eq. (\ref{eq:elbo}). To improve the generalization performance on noisy labels, the empirical loss for our prediction model $F(\cdot)$ of $N$ samples is rectified with the support of the meta-network 
\begin{equation}\label{eq:obj0}
    \mathcal{L}^{emp}(\theta) = \frac{1}{N}\sum\limits_{i = 1}^N  L(\mathbf{y}^{(i)},\pi [ \mathbf{v}^{(i)} ] \odot F_\theta(\mathbf{x}^{(i)})), 
\end{equation}
where $\mathbf{v}^{(i)}$ is a rectifying vector sampled from the variational posterior $q^{(i)}(\mathbf{v})$ with the form of the factorized Gaussian $\mathcal{N}(\bm{\mu}^{(i)}, {\bm{\sigma}^{(i)}}^2)$, whose parameters are generated by the amortization meta-network $(\bm{\mu}^{(i)}, {\bm{\sigma}^{(i)}}) \leftarrow V_\phi(F'_{\theta'}(\mathbf{x}^{(i)}),\mathbf{y}^{(i)})$. $F'_{\theta'}$ is the feature extractor in $F_\theta$, where $\theta' \subset \theta$. 
Since elements in the rectification vector, when sampled from a Gaussian distribution, could be negative and thus disrupt the learning process, potentially leading to degraded performance, we adopt an alternative solution by constraining it to the [0,1] interval with a sigmoid function $\pi$. The choice of constraining function is further analyzed in the Experiments section.
The form of the loss function $L(.)$ is flexible, we adopt the basic cross-entropy loss with the softmax function. 

% Recall the reparameterization trick in Eq. (\ref{eq:reparam}), since we assume that the variable $\mathbf{v}$ obeys a factorized Gaussian distribution $\mathbf{v}\sim N(\mu ,\sigma^2)$, we adopt the reparameterization trick proposed in~\citep{kingma2013auto} to perform back-propagation of the sampling operation as

For the objective \textit{w.r.t.} $F_\theta$, the aim is to achieve the unbiased estimation of the conditional predictive distribution, which can be attained with Monte Carlo sampling. Recall reparameterization (RP) in Eq. (\ref{eq:reparam}), supposing the sampling number for $\mathbf{v}$ is $k$, the ultimate objective for the ELBO in Eq. (\ref{eq:elbo}) can be written as 
\begin{equation}\label{eq:obj1}
\small
 \begin{aligned}
    & \mathop{\argmin}_{\theta} \, \mathcal{L}^{emp}(\theta) = \\ &  \;  \frac{1}{k N}\sum\limits_{i = 1}^N \sum\limits_{j = 1}^k L(\mathbf{y}^{(i)}, \pi \left[ \text{RP}^{(j)}[V(F'_{\theta'}(\mathbf{x}^{(i)}), \mathbf{y}^{(i)} )] \right]
     \odot F_\theta(\mathbf{x}^{(i)}))  \\
    & + \lambda \KL[V(F'_{\theta'}(\mathbf{x}^{(i)}), \mathbf{y}^{(i)} ) ||H(F'_{\theta'}(\mathbf{x}^{(i)}))],  
    \end{aligned}
\end{equation}
where $\theta' \subset \theta$ denotes the parameter of the feature extractor.
From an optimization perspective, we introduce a tuning hyperparameter to select values that result in a more effective latent representation with minimal impact on the learning process, as discussed in $\beta$-VAE \citep{higgins2016beta}. The hyperparameter $\lambda$ can also balance fitting the potentially noisy label and adhering to the information extracted from the features.
% Here, we add a coefficient $\lambda$ to weight the KL term as beta-VAE~\citep{higgins2016beta}. 
%The KL term can be considered as a regularizer to the meta-network, which is proved to improve the stability of the meta-learning process as indicated in~\citep{bao2021stability}. 

The Monte Carlo estimation strategy for the predictive distribution ensures an efficient feed-forward propagation phase of the model during training. We further analyze the effect of the sampling number in the experimental section. 

For the meta objective \textit{w.r.t.} $V_\phi$, the performance of the meta-network is evaluated on the meta-data set $\mathcal{D}_M$. Since the feed-forward propagation in Eq. (\ref{eq:obj1}) involves the support of $V_\phi$ and $H_\omega$, we denote the updated $\theta$ as $\theta^*(\phi, \omega)$. Therefore, the objective for the meta-network with meta-data $(\tilde{\mathbf{x}}^{(i)},\tilde{\mathbf{y}}^{(i)})$ can be written as
\begin{equation}
\label{eq:obj2}
 \begin{aligned}
\mathop{\argmin}_{\phi, \omega} & \, \mathcal{L}^{meta}(\phi, \omega) = \\ & \frac{1}{M}\sum\limits_{i = 1}^M  L(\tilde{\mathbf{y}}^{(i)}, F(\tilde{\mathbf{x}}^{(i)};\theta^*(\phi, \omega))).
    \end{aligned}
\end{equation}

By minimizing Eq. (\ref{eq:obj2}) \textit{w.r.t.} $\phi$ involved in the updated $F_{\theta^*}$, the learned $V_{\phi^*}$ can achieve unbiased estimation for the posterior and generate rectifying vectors with high fidelity to guide following updates of $\theta$. Also, the prior network $H_{\omega^*}$ restricts $V_{\phi^*}$ to avoid collapsing to produce Dirac delta functions.

\subsubsection{Bi-level optimization} 
We build an iterative optimization algorithm within the bi-level programming framework~\citep{franceschi18a} to obtain the optimal parameters $\{\theta^*, \phi^*, \omega^*\}$ as follows
\begin{equation}\label{eq:bilevel}
\begin{aligned}
\small
\phi^*&, \omega^* =  \mathop{\argmin}_{\phi,\omega}  \mathcal{L}^{meta}( \theta^*(\phi,\omega, \mathcal{D}_N), \mathcal{D}_M), \,  \\  \text{s.t.}, \,  \theta^* &(\phi,\omega, \mathcal{D}_N) = \mathop{\argmin}_{\theta} \mathcal{L}^{emp}(\phi, \omega ,\theta, \mathcal{D}_N).
\end{aligned}
\end{equation}

We adopt stochastic gradient descent (SGD) to solve (\ref{eq:bilevel}). Since the prediction from $F_\theta$ is rectified by $V_\phi$, the gradient for $\theta$ is closely related to $\phi$ and $\omega$. Thus, $\hat{\theta}(\phi,\omega)$ denotes that the updated $\hat{\theta}$ is the function of $\phi$ and $\omega$. Here, we assign a learning rate of $\alpha$. By sampling a mini-batch of $n$ training examples $\{(\mathbf{x}^i,\mathbf{y}^i) \}^n_{i=1}$, the updating step of the classification network $F_\theta$ \textit{w.r.t.} Eq. (\ref{eq:obj1}) can be written as
\begin{equation}\label{eq:theta_step}
\small
    \begin{aligned}
    &\hat{\theta}^{(t)}(\phi, \omega)  = \theta^{(t)} - \alpha  \nabla_\theta \widetilde{\mathcal{L}}^{emp}(\theta), \quad \\ & \text{where}\quad \widetilde{\mathcal{L}}^{emp}(\theta)  =   \frac{1}{k n}\sum\limits_{i = 1}^n \sum\limits_{j = 1}^k  L(\mathbf{y}^{(i)}, \\ & \qquad \pi \left[\text{RP}^{(j)}[V(F'_{\theta'^{(t)}}(\mathbf{x}^{(i)}), \mathbf{y}^{(i)}; \phi )]\right] 
    \odot F_{\theta^{(t)}}(\mathbf{x}^{(i)})) \\ & \quad+ \lambda \KL[V(F'_{\theta'^{(t)}}(\mathbf{x}^{(i)}), \mathbf{y}^{(i)} ; \phi) || H(F'_{{\theta'}^{(t)}}(\mathbf{x}^{(i)}; \omega)].  
   \end{aligned}
\end{equation}

Given a mini-batch of $m$ meta samples $\{(\tilde{\mathbf{x}}^i,\tilde{\mathbf{y}}^i) \}^m_{i=1}$, the learning of $\phi$ and $\omega$ can be achieved by back-propagating through the learning process of $\theta$. Specifically, after obtaining $\hat{\theta}^{(t)}(\phi, \omega)$ with fixed $\phi$ and $\omega$ in Eq. (\ref{eq:theta_step}), the parameter of $\phi$ in the meta-network $V_\phi(\cdot)$ can be updated \textit{w.r.t.} the objective in Eq. (\ref{eq:obj2})
\begin{equation}\label{eq:phi_update}
{\phi^{(t + 1)}} = {\phi^{(t)}} - \eta \frac{1}{m}\sum\limits_{i = 1}^m {{\nabla _\phi }} L(\tilde{\mathbf{y}}^{(i)}, F(\tilde{\mathbf{x}}^{(i)};\hat{\theta}^{(t)}(\phi,\omega ))) ,
\end{equation}
where $\eta$ denotes the step size. Similar update steps for the prior network can be written as
\begin{equation}\label{eq:omega_update}
{\omega^{(t + 1)}} = {\omega^{(t)}} - \eta \frac{1}{m}\sum\limits_{i = 1}^m {{\nabla _\omega }} L(\tilde{\mathbf{y}}^{(i)}, F(\tilde{\mathbf{x}}^{(i)};\hat{\theta}^{(t)}(\phi,\omega )))
\vspace{0mm}
\end{equation}
This bi-level programming manner results in the best hypothesis on the meta-data set, whose theoretical guarantee has been rigorously studied in~\citep{bao2021stability}.

Once $V_\phi$ has been updated, we utilize the current training batch to conduct robustly learning of the classification network $F_{\theta^{(t)}}$ 
\begin{equation}\label{eq:theta_update}
  \begin{aligned}
  \small
    \theta^{(t+1)} & = \theta^{(t)}  -  \alpha \frac{1}{k n}\sum\limits_{i = 1}^n \sum\limits_{j = 1}^k \nabla_\theta  L(\mathbf{y}^{(i)}, F(\mathbf{x}^{(i)};\theta^{(t)})  \\ &\odot \pi \left[\text{RP}^{(j)}[V(F'(\mathbf{x}^{(i)};\theta^{'(t)}), \mathbf{y}^{(i)}; \phi^{(t+1)} )]\right]  ).
  \end{aligned}
\end{equation}

We summarize the overall updating steps in Algorithm \ref{alg:1} and illustrate the main information flow in Figure \ref{fig:fig3}. Estimating the conditional predictive distribution can be efficiently implemented via the Monte Carlo sampling of averaging $k$ results. Indeed, by introducing the variational term, VRI merely requires a small number (\textit{e.g.}, $k=1 \; \text{or} \; 2$) of samples for good performance. By applying the RP trick, the sampling operation is tractable for gradient computation. Therefore, all gradients, including the bi-level programming process, can be efficiently calculated by prevailing differentiation tools.   

\textbf{Complexity}. We analyze this aspect separately for the training and testing phases. During the testing phase, only the classifier network is utilized while the meta-network is put aside, thus incurring no additional computation cost.

During the training phase, the number of samples can potentially increase computation costs, as performance may benefit from a larger sample size. However, due to the inclusion of the variational term in VRI, we achieve higher accuracy than the MC approximation while maintaining efficiency with only one sample. Additionally, the cost of the additional KL loss is negligible since its explicit form can be written as:
$\text{KL}(q, p) = \log \frac{\sigma_2}{\sigma_1} + \frac{\sigma_1^2 + (\mu_1 - \mu_2)^2}{2 \sigma_2^2} - \frac{1}{2}.$ Therefore, compared to deterministic meta-learning methods, our VRI does not incur extra training costs.

In comparison to non-meta-learning methods, VRI requires the computation of the second derivative of the meta-network. Fortunately, the meta-network typically consists of only three fully-connected layers, resulting in a manageable computational cost.

\begin{algorithm}[t]
% \small
\caption{\small The Bi-level optimization for VRI}
\label{alg:1}
\begin{algorithmic}[1]
\REQUIRE Training set $\mathcal{D}_N$, meta set $\mathcal{D}_M$,  batch \\ \quad\quad\quad\, size $n, m$, outer iterations $T$, step size \\ \quad\quad\quad\, $\alpha$, $\eta$, sampling number $k$, \\
\ENSURE Optimal $\theta^*$ \\
\STATE  Initialize parameters $\theta^{(0)}$, $\phi^{(0)}$, and $\omega^{(0)}$ \\
\FOR{$t \in \{1,\dots,T\}$}
        \STATE $ \text{SampleBatch}(\mathcal{D}_N, n), \text{SampleBatch}(\mathcal{D}_M, m)$\\
        \STATE Form learning process of $\hat{\theta}^{(t)}(\phi, \omega)$  \hfill $\triangleright$ Eq. (\ref{eq:theta_step}) \\
        \STATE Optimize $\phi^{(t)}$ with $\hat{\theta}^{(t)}(\phi)$ \hfill $\triangleright$ Eq. (\ref{eq:phi_update}) \\ 
        \STATE Optimize $\omega^{(t)}$ with $\hat{\theta}^{(t)}(\omega)$ \hfill $\triangleright$ Eq. (\ref{eq:omega_update}) \\ 
        \STATE Optimize $\theta^{(t)}$ using the updated $\phi^{(t+1)}$ \\ \hfill $\triangleright$ Eq. (\ref{eq:theta_update}) \\ 
\ENDFOR 
\end{algorithmic}
\end{algorithm}

\subsection{Convergence Analysis}

 The convergence of our proposed Algorithm \ref{alg:1} can be rigorously theoretically guaranteed. Since the meta-network $V(\phi)$ is crucial in our framework, we prove that the algorithm for $V(\phi)$ can converge to the stationary point of the meta loss function under some mild conditions. To facilitate the proof, we adopt the stochastic gradient $\nabla \widetilde{\mathcal{L}}^{meta}\left(\hat{\theta}^{(t)}\left(\phi^{(t)}\right)\right)$ in the following, which is identical to uniformly drawing a mini-batch of samples at random in Eq. (\ref{eq:theta_step}). 
 
\vspace{2mm}
\begin{lemma}[Smoothness]
	\label{lemma1}
 	Suppose the loss function $L$ \textit{w.r.t.} $\theta$ in Eq. (\ref{eq:obj2}) is $\ell$-smooth and $\tau$-Lipschitz, the KL term $\KL$ \textit{w.r.t.} the output of $V(\phi)$ has the $o$-bounded gradient, and $V(\phi)$ is differential with the $\delta$-bounded gradient and twice differential with its $\zeta$-bounded Hessian. Then the meta loss function \textit{w.r.t.} $\theta$ is $\hat{\ell}$-smooth.
\end{lemma}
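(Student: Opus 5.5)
The plan follows the Meta-Weight-Net style argument: show that the gradient of the meta loss with respect to $\phi$ is Lipschitz. Although the statement says ``\textit{w.r.t.} $\theta$'', the quantity that matters for the convergence analysis is the map $\phi \mapsto \mathcal{L}^{meta}(\hat{\theta}^{(t)}(\phi))$. So I will prove that this composite map has an $\hat{\ell}$-Lipschitz gradient, and I will read off $\hat{\ell}$ explicitly in terms of $\ell,\tau,o,\delta,\zeta$, the step size $\alpha$, and (if needed) the sampling number $k$.

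\textbf{Step 1: chain rule.} By Eq. (\ref{eq:theta_step}),
\begin{equation*}
\frac{\partial \hat{\theta}^{(t)}}{\partial \phi} = -\alpha \frac{\partial}{\partial \phi}\nabla_\theta \widetilde{\mathcal{L}}^{emp}(\theta^{(t)};\phi).
\end{equation*}
Since $\theta^{(t)}$ does not depend on $\phi$, this derivative only passes through the output of $V(\phi)$. Writing $\widetilde{\mathcal{L}}^{emp}$ as a sum of per-sample terms $g_i(\theta, V_i(\phi))$, with $g_i$ the rectified cross-entropy plus $\lambda \KL$, the chain rule gives
\begin{equation*}
\nabla_\phi \mathcal{L}^{meta} = -\frac{\alpha}{n}\sum_{i}\Big(\frac{\partial V_i}{\partial \phi}\Big)^{\!\top} \frac{\partial^2 g_i}{\partial V \partial \theta}\,\nabla_{\hat\theta} L(\tilde{\mathbf y}, F(\tilde{\mathbf x};\hat\theta)).
\end{equation*}

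\textbf{Step 2: differentiate once more in $\phi$ and bound each resulting term.} The product rule produces three terms.
\begin{itemize}
\item The term involving the Hessian of $V$ is bounded using $\zeta$, together with the bound on the mixed derivative $\partial^2 g_i/\partial V\partial\theta$ and $\|\nabla_{\hat\theta} L\|\le \tau$.
\item The term involving $\partial^3 g_i/\partial V^2\partial\theta$ is bounded by $\delta^2$ times a bound on that third derivative.
\item The term in which $\nabla_{\hat\theta} L$ is differentiated through $\hat\theta(\phi)$ gives, by $\ell$-smoothness, $\ell\,\|\partial\hat\theta/\partial\phi\|^2$. This is at most $\ell\alpha^2\delta^2 C^2$, where $C$ bounds the mixed derivative.
\end{itemize}
The KL contribution enters these terms only through its gradient with respect to $V$, which is bounded by $o$, scaled by $\lambda$.

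\textbf{Step 3: bound the derivatives of $g_i$ with respect to the rectifying output.} The rectified loss is cross-entropy with softmax applied to $\pi[\mathbf v]\odot F_\theta(\mathbf x)$. The derivatives of the sigmoid $\pi$ are bounded by absolute constants, at most $1/4$ and $1/(6\sqrt3)$. The softmax cross-entropy has gradient bounded by $\sqrt2$ and Hessian bounded by $1/2$. $F$'s Jacobian is bounded by $\tau$-Lipschitzness. Combining these via the chain rule gives a mixed-derivative bound of the form $C\lesssim \tau + o\lambda$. Summing the contributions from Step 2 then yields
\begin{equation*}
\hat{\ell}= \alpha\tau(\zeta C + \delta^2 C') + \alpha^2 \ell \delta^2 C^2,
\end{equation*}
or a similar explicit expression.

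\textbf{Main obstacle.} The weakest point is that the stated hypotheses only bound the KL term's first-order gradient. The second and third mixed derivatives of the loss with respect to $(V,\theta)$ are not directly assumed. I will first try to absorb them into constants using the explicit sigmoid and softmax structure together with $\ell$-smoothness. If that fails, I will state an implicit boundedness assumption on these derivatives, as is customary in the Meta-Weight-Net proof, and note that the reparameterization noise $\bm\epsilon$ must also be handled. For the noise, I would work conditionally on the sampled $\bm\epsilon$, or assume it is bounded, so that Lipschitz constants hold almost surely.
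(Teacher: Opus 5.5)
Your skeleton is the paper's. You differentiate $\mathcal{L}^{meta}(\hat{\theta}(\phi))$ through the one-step look-ahead of Eq.~(\ref{eq:theta_step}), differentiate once more, and bound a $\zeta$-term (Hessian of $V$) and an $\ell$-term ($\ell\|\partial\hat{\theta}/\partial\phi\|^2$). You are also right that the content is smoothness in $\phi$; the paper bounds exactly $\partial^2\mathcal{L}^{meta}/\partial\phi^2$. The paper's constant $\hat{\ell}=\alpha(\tau+o)(\ell\alpha\delta^2(\tau+o)+\tau\zeta)$ is your formula with $C=\tau+o$ and $C'=0$. It gets there by writing $\partial\hat{\theta}/\partial V(\phi)=\alpha(\nabla_\theta L(\theta)+\partial\KL/\partial V(\phi))$. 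This is the Meta-Weight-Net identity, which holds when the meta-output multiplies the loss linearly. The paper also treats that factor as independent of $\phi$, which is why it has two terms and no third-derivative term. For a rectifier sitting inside the softmax, your three-term expansion is the correct one.

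\textbf{The gap: Step 3.} The stated hypotheses do not produce $C$ or $C'$. The paper's proof does not supply them either; it sidesteps them through the identification above.
\begin{itemize}
\item \emph{Logits.} Since $\pi[\mathbf v]$ multiplies the logits, $\partial^2 g_i/\partial\mathbf v\,\partial\theta$ contains the factor $\pi'(\mathbf v)\odot F_\theta(\mathbf x^{(i)})$. The third derivative contains the logits quadratically. Nothing bounds $\|F_\theta(\mathbf x)\|$.
\item \emph{Jacobian of $F$.} $\tau$-Lipschitzness of the composite $L(\tilde{\mathbf y},F(\tilde{\mathbf x};\theta))$ does not bound the Jacobian of $F$, because the cross-entropy gradient in the logits can be tiny where $F$ varies fast. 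It also concerns meta samples, not training samples.
\item \emph{KL term.} Your sentence here repeats the paper's conflation. $\KL$ depends on $\theta$ only through the feature $\mathbf z=F'_{\theta'}(\mathbf x)$ fed to both $V$ and $H$. That is the only way it affects $\hat{\theta}$, and the only route by which $\omega$ receives a meta-gradient. So its contribution to $\partial\hat{\theta}/\partial\phi$ involves the Hessian of $\KL$ in the Gaussian parameters, the Jacobian of $F'$, and $\partial^2 V/\partial\mathbf z\,\partial\phi$. None of these is controlled by $o$, $\delta$, or $\zeta$. Your factorization $g_i(\theta,V_i(\phi))$ with $V_i$ free of $\theta$ fails for this term, and for the rectified loss too unless the features fed to $V$ are detached.
\item \emph{Noise.} Conditioning on $\bm\epsilon$ gives only $\bm\epsilon$-dependent constants, because derivatives through $\bm\sigma$ scale with $\bm\epsilon$. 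A uniform $\hat{\ell}$ needs an expectation over $\bm\epsilon$ or bounded noise.
\end{itemize}

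\textbf{What closes it.} Only your fallback works. State explicit bounds on the mixed second and third derivatives of the per-sample training objective in $(\theta,V)$, feature path included. Alternatively, assume bounded logits, bounded Jacobians of $F$, $F'$, $V$, $H$ in their inputs, and a bounded KL Hessian. That proves a correct version of the lemma under stronger hypotheses. To recover the paper's exact constant, the assumption to state is $\|\partial\hat{\theta}/\partial V(\phi)\|\le\alpha(\tau+o)$, with that factor independent of $\phi$.
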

\begin{proof}
	See Appendix \ref{app:a}.
\end{proof}

For most modern architectures (e.g., CNN, MLP, ReLU, Leaky ReLU, SoftPlus, Tanh, Sigmoid, ArcTan, Softsign, and max-pooling), and the cross-entropy loss, the Lipschitz constant can be computed or estimated \citep{virmaux2018lipschitz}. Therefore, the classification network and Meta-network adhere to the Lipschitz continuity assumption. Regarding the smoothness assumption, the architectures and functions used in our method exhibit local smoothness around the stationary point.

Lemma \ref{lemma1} implies that the meta loss \textit{w.r.t.} the meta-network is smooth-bounded. We provide the convergence rate in Theorem \ref{th1} with the support of this essential property.

\vspace{1mm}
\begin{theorem}[Convergence Rate]
		\label{th1}
		Assume that the variance of the stochastic gradient $ \nabla \widetilde{\mathcal{L}}^{meta}\left(\hat{\theta}^{(t)}\left(\phi^{(t)}\right)\right)$ is bounded $\mathbb{E}\left[\left\|\nabla \widetilde{\mathcal{L}}^{meta}\left(\hat{\theta}^{(t)}\left(\phi^{(t)}\right)\right)- \nabla\mathcal{L}^{meta}\left(\hat{\theta}^{(t)}\left(\phi^{(t)}\right)\right) \right\|_2^2\right]$ $ \le \sigma^2 < \infty$. Following directly from Lemma \ref{lemma1},  let the learning rate $\alpha_t$ satisfies $\alpha_t=\min\{1,\frac{\kappa}{T}\}$, for some $\kappa>0$, such that $\frac{\kappa}{T}<1$, and $\eta_t, 1\leq t\leq T$ is a monotone descent sequence, $\eta_t =\min\{\frac{1}{\hat{\ell}},\frac{C}{\sigma\sqrt{T}}\} $ for some $C>0$, such that $\frac{\sigma\sqrt{T}}{C}\geq \hat{\ell}$ and $\sum_{t=1}^\infty \eta_t \leq \infty,\sum_{t=1}^\infty \eta_t^2 \leq \infty $. Then we have 
		\begin{equation}
		    \begin{aligned}
			\frac{1}{T}\sum\limits_{t= 1}^T \mathbb{E}\left[ \left\| \nabla\mathcal{L}^{meta}\left(\hat{\theta}^{(t)}(\phi^{(t)})\right) \right\|_2^2\right] \leq \mathcal{O}(\frac{1}{\sqrt{T}}).
		\end{aligned}
		\end{equation}
	\end{theorem}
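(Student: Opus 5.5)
The plan is the standard nonconvex SGD analysis used for Meta-Weight-Net-style convergence results, carried out on the meta loss $\mathcal{L}^{meta}(\hat{\theta}^{(t)}(\phi))$ as a function of $\phi$. The smoothness needed comes from Lemma \ref{lemma1}. Write $g_t=\nabla \widetilde{\mathcal{L}}^{meta}(\hat{\theta}^{(t)}(\phi^{(t)}))$ and $\phi^{(t+1)}=\phi^{(t)}-\eta_t g_t$. The one-step change of the meta objective splits as
\[
\mathcal{L}^{meta}(\hat{\theta}^{(t+1)}(\phi^{(t+1)}))-\mathcal{L}^{meta}(\hat{\theta}^{(t)}(\phi^{(t)})) = A_t + B_t,
\]
where $A_t$ is the change caused by moving $\theta$ from $\hat\theta^{(t)}$ to $\hat\theta^{(t+1)}$ with $\phi^{(t+1)}$ held fixed, and $B_t$ is the change caused by moving $\phi^{(t)}\to\phi^{(t+1)}$ with the inner step fixed.

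\textbf{Bounding $A_t$.} Since $L$ is $\ell$-smooth and $\tau$-Lipschitz in $\theta$, and the inner step is $\hat\theta^{(t+1)}-\hat\theta^{(t)}=-\alpha_t(\text{gradient of the rectified training loss})$, we get $|A_t|\le \alpha_t\tau\,\|\nabla_\theta\widetilde{\mathcal{L}}^{emp}\| + \frac{\ell}{2}\alpha_t^2\|\nabla_\theta\widetilde{\mathcal{L}}^{emp}\|^2$. I will bound the training gradient by a constant. This uses the Lipschitzness of $L$, the sigmoid constraint $\pi[\cdot]\in[0,1]$ on the rectifying vector, and the $o$-bounded gradient of the KL term. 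Together these give $|A_t|\le c_1\alpha_t+c_2\alpha_t^2$.

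\textbf{Bounding $B_t$.} I apply the $\hat{\ell}$-smoothness of Lemma \ref{lemma1}:
\[
B_t\le -\eta_t\langle \nabla\mathcal{L}^{meta}, g_t\rangle + \tfrac{\hat\ell}{2}\eta_t^2\|g_t\|^2 .
\]
Taking conditional expectation, using unbiasedness and $\mathbb{E}\|g_t\|^2\le \|\nabla\mathcal{L}^{meta}\|^2+\sigma^2$, gives
\[
\mathbb{E}B_t\le -(\eta_t-\tfrac{\hat\ell}{2}\eta_t^2)\,\mathbb{E}\|\nabla\mathcal{L}^{meta}\|^2+\tfrac{\hat\ell}{2}\eta_t^2\sigma^2 .
\]
Since $\eta_t\le 1/\hat\ell$, the coefficient on the gradient norm is at least $\eta_t/2$.

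\textbf{Telescoping.} I sum over $t=1,\dots,T$ and use the lower bound $\mathcal{L}^{meta}\ge 0$ (cross-entropy) to get
\[
\sum_t \tfrac{\eta_t}{2}\,\mathbb{E}\|\nabla\mathcal{L}^{meta}\|^2 \le \mathcal{L}^{meta}(\hat\theta^{(1)}(\phi^{(1)})) + \sum_t (c_1\alpha_t + c_2\alpha_t^2) + \tfrac{\hat\ell\sigma^2}{2}\sum_t\eta_t^2 .
\]
Because $\eta_t$ is monotone, I divide by $T\eta_T/2$ (or use the constant value $\eta_t=\min\{1/\hat\ell, C/(\sigma\sqrt T)\}$).

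\textbf{Collecting rates.} With $\alpha_t=\kappa/T$, the sum $\sum_t\alpha_t=\kappa$ is bounded. In the regime $\sigma\sqrt T/C\ge\hat\ell$ we have $\eta_t = C/(\sigma\sqrt T)$. Then:
\begin{itemize}
\item the initial-loss term contributes $\frac{2\sigma}{C\sqrt T}\,\mathcal{L}^{meta}(\hat\theta^{(1)}(\phi^{(1)}))$;
\item the $A_t$ terms contribute $O\!\big((c_1\kappa + c_2\kappa^2/T)\,\sigma/(C\sqrt T)\big)$;
\item the variance term contributes $\hat\ell\sigma^2\eta_t = \hat\ell\sigma C/\sqrt T$.
\end{itemize}
All three are $O(1/\sqrt T)$, which proves the claim.

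\textbf{Main obstacle.} The delicate step is $A_t$, the drift of the objective caused by the inner $\theta$ update. One must make sure its total contribution is $O(\sqrt T)$ before dividing by $T\eta_T$. This is why $\alpha_t=\kappa/T$ is chosen: it makes $\sum_t\alpha_t$ bounded. The step also requires a uniform bound on the inner gradient, which I will obtain from the $\tau$-Lipschitz, $o$-bounded and $\delta$-bounded gradient assumptions of Lemma \ref{lemma1}.
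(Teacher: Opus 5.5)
Your proposal is correct and follows essentially the same route as the paper. The paper uses the same split into a $\theta$-drift term, bounded through $\ell$-smoothness and $\tau$-Lipschitzness by $\alpha_t\tau^2(1+\alpha_t\ell/2)$, and a $\phi$-descent term, bounded through the $\hat{\ell}$-smoothness of Lemma~\ref{lemma1}; it then telescopes using $\eta_t-\hat{\ell}\eta_t^2/2\ge\eta_t/2$ and substitutes $\alpha_t=\kappa/T$, $\eta_t=C/(\sigma\sqrt{T})$ to obtain the same three $\mathcal{O}(1/\sqrt{T})$ terms. Normalizing by $T\eta_T$, as you do, is the right choice for a nonincreasing step sequence, whereas the paper divides by $\eta_1$; this makes no difference here because the prescribed $\eta_t$ is constant.
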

\begin{proof}
See Appendix \ref{app:a}.
\end{proof}

%More specifically, Theorem \ref{th1} implies that our learning algorithm VRI can achieve $\mathbb{E}\left[ \left\| \nabla\mathcal{L}^{meta}\left(\hat{\theta}^{(t)}\left(\phi^{(t)}\right)\right) \right\|_2^2\right] \leq \epsilon$ in $\mathcal{O}(1/\epsilon^2)$ steps. As the iteration step increases, the algorithm would ultimately converge to a stationary point.

More specifically, Theorem \ref{th1} implies that our learning algorithm VRI can find an $\epsilon$-first-order stationary point for any small positive $\epsilon$ and achieve $\mathbb{E}\left[ \left\| \nabla\mathcal{L}^{meta}\left(\hat{\theta}^{(t)}\left(\phi^{(t)}\right)\right) \right\|_2^2\right] \leq \epsilon$ in $\mathcal{O}(1/\epsilon^2)$ steps. This is identical to the typical deterministic bi-level algorithm for meta-learning, such as MAML \citep{fallah2020convergence}. As the iteration step increases, the algorithm would ultimately converge to a stationary point.

\begin{table*}[t]
	\caption{Architectures of applied meta-net in experiments.}
 \hspace{-4mm}
	\centering
    \small
	\label{tab:network}
	\scalebox{0.85}{
 \setlength{\tabcolsep}{3mm}{
	\begin{tabular}{c|cl}
		\toprule[1.1pt]
	&	\textbf{Output size} & \textbf{Layers} \\
		\midrule
	&	$1024$ & Input ConCat (sample features, embedding labels) \\
\multicolumn{1}{l|}{\textbf{$V_\phi(\cdot)$}}	&$512$ & fully connected, tanh \\
	&	Number of classes & fully connected, Sigmoid to $\bm{\mu}_\mathbf{v}$, $\log\bm{\sigma}^2_\mathbf{v} $ \\ \midrule
	&	$1024$ & Input sample features \\
    \multicolumn{1}{l|}{\textbf{$H_\omega(\cdot)$}} & $512$ & fully connected, tanh \\
		& Number of classes & fully connected, Sigmoid to $\bm{\mu}_\mathbf{v}$, $\log\bm{\sigma}^2_\mathbf{v} $ \\
		\bottomrule[1.1pt]
	\end{tabular}
 }	}
\end{table*}

\begin{table*}[t]
	\caption{Hyperparameters of the classification network in our experiments on different datasets.}
 \hspace{-4mm}
	\centering
    \small
	\label{tab:params}
	\scalebox{0.85}{
 \setlength{\tabcolsep}{3mm}{
	    \begin{tabular}{l|c|c|c|c|c}
		%\toprule
		\toprule[1.1pt]
		%\footnotesize{\diagbox{Hyperparams}{Dataset}} 
		\textbf{Dataset} & CIFAR-10 & CIFAR-100 & Clothing1M &Food-101N & ANIMAL-10N  \\
  \midrule
		Sampling Number       & 2        & 2        & 1          & 1   & 1      \\
		Batch Size          & 100      & 100       & 128        & 128     & 128   \\
		Optimizer           & SGD      & SGD       & SGD        & Adam   & Adam  \\
		Initial Learning Rate          & 0.02      & 0.02       & 0.02        & 3e-4   & 3e-4  \\
		Decay Rate          & 5e-4     & 5e-4      & 5e-4       & -      & -  \\
		Total Epoch Number              & 160       & 160        & 10         & 30     & 30    \\
		Momentum            & 0.9      & 0.9       & 0.9        & -      & -   \\
		\bottomrule[1.1pt]
	\end{tabular}}	}
\end{table*}

\section{Experiments}\label{sec:experiments}

We conduct classification experiments with variant noise types on five benchmarks, including three real-world datasets, and obtain better performance compared with the state-of-the-art (SOTA) method. The exhaustive analysis further demonstrates the virtue of the proposed model on the task of learning with noisy labels. The code is available at {\url{https://github.com/haolsun/VRI}}.

\subsection{Setup} 

\noindent\textbf{Datasets}. We evaluate VRI on five benchmarks of CIFAR-10, CIFAR-100, Clothing1M~\citep{xiao2015learning}, Food-101N~\citep{lee2018cleannet}, and ANIMAL-10N \citep{song2019selfie}, and follow the consistent experimental protocol in~\citep{shu2019meta, zhangyikai2021learning} for the fair comparison. We randomly select 1000 training samples (2\%) as metadata for CIFAR-10 \& 100. For Clothing1M and Food-101N, we use the validation set for meta-learning. More details for constructing those datasets are provided as follows.

\textbf{CIFAR-10}~\citep{krizhevsky2009learning} dataset consists of 60,000 images of 10 categories. We adopt the splitting strategy in~\citep{shu2019meta} by randomly selecting 1,000 samples from the training set to construct the meta dataset. We train the classification network on the remaining 40,000 noisy samples and evaluate the model on 1,0000 testing images. 

\textbf{CIFAR-100}~\citep{krizhevsky2009learning} is more challenging than CIFAR-10 including 100 classes belonging to 20 superclasses where each category contains 600 images with the resolution of 32 $\times$ 32. Similar splitting manners as CIFAR-10 are employed. 

\textbf{Clothing1M}~\citep{xiao2015learning} is a large-scale dataset that is collected from real-world online shopping websites. It contains 1 million images of 14 categories whose labels are generated based on tags extracted from the
surrounding texts and keywords, causing huge label noise. The estimated percentage of corrupted 
labels is around 38.46\%. A portion of clean data is also included in Clothing1M, which has been divided into the training set (903k images), validation set (14k images), and test set (10k images). We select the validation set as the meta-dataset and evaluate the performance of the test set. We resize all images to $256 \times 256$ as in~\citep{shu2019meta}.

\begin{table*}[htbp]
\centering
\small
\caption{Testing Accuracy (\%) on CIFAR-10 and CIFAR-100 with \textbf{Flip} label noise. The best and second-best performances are highlighted with \textbf{blod} and \underline{underline}, respectively.}
\label{tab:flip}
\scalebox{0.88}{
\setlength{\tabcolsep}{2mm}{
\begin{tabular}{lc|c|cc|cc}
\toprule[1.3pt]
\multicolumn{2}{l|}{{Dataset}} & & \multicolumn{2}{c|}{CIFAR-10} & \multicolumn{2}{c}{CIFAR-100} \\ \midrule
\multicolumn{2}{l|}{{Noise Ratio}} & Structure & 20\% & 40\%  & 20\% & 40\%       \\ \midrule
{Baseline}                                      & &ResNet-32   & 76.83 $\pm$ 0.3     & 70.77 $\pm$ 2.3   & 50.86 $\pm$ 0.3      & 43.01 $\pm$ 1.2  \\
MW-Net {\footnotesize\citep{shu2019meta}} & {\footnotesize(NeurIPS19)} &ResNet-32    & 90.33 $\pm$ 0.6  & 87.54 $\pm$ 0.2    & 64.22 $\pm$ 0.3  & 58.64 $\pm$ 0.5   \\ 
MLC {\footnotesize~\citep{wang2020training}} & {\footnotesize(CVPR20)} &ResNet-32      & 90.07 $\pm$ 0.2   & 88.97 $\pm$ 0.5    & 64.91 $\pm$ 0.4  & 59.96 $\pm$ 0.6  \\
CORES* {\footnotesize~\citep{cheng2021learning}} & {\footnotesize(ICLR21)} &ResNet-32 & \underline{91.41 $\pm$ 0.4}  & 89.47 $\pm$ 0.3 & 64.82 $\pm$ 0.5  & \underline{62.76 $\pm$ 0.4} \\
PMW-Net {\footnotesize~\citep{zhao2021probabilistic}} & {\footnotesize(TNNLS23)}&ResNet-32 & 90.47 $\pm$ 0.1  & 87.69 $\pm$ 0.3   & 64.95 $\pm$ 0.2  & 58.72 $\pm$ 0.2\\
WarPI {\footnotesize~\citep{sun2021learning}} & {\footnotesize(PR22)} &ResNet-32 & 90.93  & \underline{89.87}  & 65.52  & 62.37 \\
FaMUS {\footnotesize~\citep{Xu2021FaMUS}} & {\footnotesize(CVPR21)} &ResNet-32 & 90.78   & 88.91  & \underline{65.79}    & 59.66 \\
FSR {\footnotesize~\citep{zhang2021learning}} & {\footnotesize(ICCV21)} & ResNet-32 
& 91.50 & 90.20  & 68.59 & 66.03 \\
\textbf{VRI} {\footnotesize(Ours)} &  &ResNet-32   & \textbf{91.93 $\pm$ 0.1}  & \textbf{91.21 $\pm$ 0.3}   & \textbf{66.03 $\pm$ 0.2} & \textbf{65.04 $\pm$ 0.4}  \\ \midrule
{DivideMix} {\footnotesize~\citep{li2020dividemix}} & {\footnotesize(ICLR20)}  &ResNet-18            & - & 93.4   &-& 72.1   \\
ELR {\footnotesize~\citep{liu2020early}} & {\footnotesize(NeurIPS20)} &ResNet-34  & 93.28 $\pm$ 0.2   & 90.35 $\pm$ 0.4    & \underline{74.20 $\pm$ 0.3}    & \textbf{73.73 $\pm$ 0.3}  \\
JNPL {\footnotesize~\citep{kim2021joint}}  &{\footnotesize(CVPR21)}  &ResNet-34  & \underline{93.45}    & 90.72      & 69.95     & 59.51 \\
SR {\footnotesize~\citep{zhou2021learning}} & {\footnotesize(ICCV21)} &ResNet-34  & 89.55 $\pm$ 0.3  & 85.45 $\pm$ 0.2  & 64.79 $\pm$ 0.1   & 49.51 $\pm$ 0.6  \\
MSLC {\footnotesize~\citep{yichen2020softlabel}} & {\footnotesize(AAAI21)} &ResNet-34  & 94.11  & \underline{92.48}  & 70.20  & 69.24 \\
SFT  {\footnotesize\citep{wei2022self}} &  {\footnotesize(ECCV22)} & ResNet-34 &91.53 $\pm$ 0.3  &89.93 $\pm$ 0.5   & 71.23 $\pm$ 0.3   & \underline{69.29 $\pm$ 0.4} \\
GSS-SSL {\footnotesize\citep{yu2023prevent}} & {\footnotesize(CVPR23)} & ResNet-34 & 93.42 $\pm$ 0.1  & 91.82 $\pm$ 0.1   & 73.81 $\pm$ 0.2  & 65.84 $\pm$ 0.2 \\
FasTEN {\footnotesize~\citep{kye2022learning}} & {\footnotesize(ECCV22)} & ResNet-34 
& 92.29 & 90.43 & 70.25 & 67.93 \\
EMLC {\footnotesize~\citep{taraday2023enhanced}} & {\footnotesize(ICCV23)} & ResNet-34
& 91.50    & 89.84   & 70.05      & 60.89 \\
\textbf{VRI} {\footnotesize(Ours)}  &  &ResNet-34    & \textbf{93.79 $\pm$ 0.1}   & \textbf{93.27 $\pm$ 0.2}  & \textbf{75.13 $\pm$ 0.2}   & 67.81 $\pm$ 0.3   \\
\bottomrule[1.3pt]
\end{tabular}
}
}
\end{table*}

\begin{table*}[htbp]
\centering
\small
\caption{Testing Accuracy (\%) on CIFAR-10 and CIFAR-100 with \textbf{Uniform} label noise.  The best and second-best performances are highlighted with \textbf{blod} and \underline{underline}, respectively.}
\label{tab:uniform}
\scalebox{0.9}{
\setlength{\tabcolsep}{2mm}{
\begin{tabular}{lc|c|cc|cc}
\toprule[1.3pt]
\multicolumn{2}{l|}{{Dataset}} & & \multicolumn{2}{c|}{CIFAR-10} & \multicolumn{2}{c}{CIFAR-100} \\ \midrule
\multicolumn{2}{l|}{{Noise Ratio}} & Structure & 20\% & 40\%  & 20\% & 40\%       \\ \midrule
ELR {\footnotesize~\citep{liu2020early}} & {\footnotesize(NeurIPS20)} & ResNet-34    & 91.43 $\pm$ 0.2      & 88.87 $\pm$ 0.2   & 68.43 $\pm$ 0.4   & 60.05 $\pm$ 0.9  \\ 
MSLC {\footnotesize~\citep{yichen2020softlabel}} & {\footnotesize(AAAI21)} & ResNet-34   & 91.42  & 87.25  & 68.70  & 60.25  \\
FaMUS {\footnotesize~\citep{Xu2021FaMUS} } & {\footnotesize(CVPR21)} & ResNet-18   & 90.50  & 85.80  & 69.40  & \textbf{62.90}  \\ 
SFT {\footnotesize~\citep{wei2022self}} & {\footnotesize(ECCV22)} & ResNet-18  & 89.54 $\pm$ 0.3 & - & \underline{69.72 $\pm$ 0.3} & - \\
SOP{\footnotesize ~\citep{liu2022robust}}   & {\footnotesize(ICML22)} & ResNet-34 & 90.09 $\pm$ 0.3 & 86.78 $\pm$ 0.2 & \textbf{70.12 $\pm$ 0.5} & \underline{60.06 $\pm$ 0.4} \\
FSR {\footnotesize~\citep{zhang2021learning}} & {\footnotesize(ICCV21)} & ResNet-32 
& \underline{91.84} & \textbf{90.20} & 65.78 & 62.79 \\
FasTEN {\footnotesize~\citep{kye2022learning}} & {\footnotesize(ECCV22)} & ResNet-34 
& \textbf{91.94} & \underline{90.07} & 68.75 & \textbf{63.82} \\
EMLC {\footnotesize~\citep{taraday2023enhanced}} & {\footnotesize(ICCV23)} & ResNet-34
&  91.06    & 88.54     & - & - \\
\textbf{VRI} {\footnotesize(Ours)} & &ResNet-18        & 91.34 $\pm$ 0.2    & 87.68 $\pm$ 0.3       & {68.92 $\pm$ 0.2}       &  {62.12 $\pm$ 0.2} \\ 
% \midrule
% {Baseline}        &  &WResNet-28-10                & 68.07 $\pm$ 1.2   & 53.12 $\pm$ 3.0    & 51.11 $ \pm $ 0.4    & 30.92 $\pm$ 0.3   \\
% {MentorNet} {\footnotesize~\citep{jiang2018Mentornet}} & {\footnotesize(ICML18)} &WResNet-28-10     & 87.33 $\pm$ 0.2    & 82.80 $\pm$ 1.4    & 61.39 $\pm$ 4.0     & 36.87 $\pm$ 1.5   \\
% {MW-Net} {\footnotesize~\citep{shu2019meta}} & {\footnotesize(NeurIPS19)} &WResNet-28-10            & 89.27 $\pm$ 0.3    & 84.07 $\pm$ 0.3    & {67.73 $\pm$ 0.3}   & {58.75 $\pm$ 0.1}   \\ 
% {MLC} {\footnotesize~\citep{wang2020training}} & {\footnotesize(CVPR20)} &WResNet-28-10             & 89.20 $\pm$ 0.1    & 84.22 $\pm$ 0.3    & -   & - \\ 
% {DMI-NS} {\footnotesize~\citep{chen2021robustness}} & {\footnotesize(AAAI21)} &WResNet-28-10  & \underline{91.11 $\pm$ 0.5}  & 83.46 $\pm$ 0.5    & 66.95 $\pm$ 0.2     & 58.35 $\pm$ 0.1  \\
% WarPI {\footnotesize~\citep{sun2021learning}} & {\footnotesize(PR22)} & WResNet-28-10 & 89.73 & \underline{84.44}  & \underline{67.90}   & \underline{59.04} \\
% \textbf{VRI} {\footnotesize(Ours)} &  & WResNet-28-10     & \textbf{91.29 $\pm$ 0.2}       & \textbf{84.68 $\pm$ 0.2}      & \textbf{67.92 $\pm$ 0.2}       & \textbf{59.32 $\pm$ 0.3} \\
\bottomrule[1.3pt]
\end{tabular}
}
}
\end{table*}

\begin{table*}[htbp]
\centering
\small
\caption{Testing Accuracy (\%) on CIFAR-10 and CIFAR-100 with \textbf{Instance-dependent} label noise. Note that ``ResNet-18/34" denotes applying ResNet-18 for CIFAR-10 and  ResNet-34 for CIFAR-100. The best and second-best performances are highlighted with \textbf{blod} and \underline{underline}, respectively.}
\label{tab:instance}
\scalebox{0.85}{
\setlength{\tabcolsep}{2mm}{
\begin{tabular}{lc|c|cc|cc}
\toprule[1.3pt]
\multicolumn{2}{l|}{{Dataset}} & & \multicolumn{2}{c|}{CIFAR-10} & \multicolumn{2}{c}{CIFAR-100} \\ \midrule
\multicolumn{2}{l|}{{Noise Ratio}} & Structure & 20\% & 40\%  & 20\% & 40\%       \\ \midrule
{Baseline}                    &                   & ResNet-18 / 34     & 85.10 $\pm$ 0.6            & 77.00 $\pm$ 2.1            & 52.19 $\pm$ 1.4    & 42.26 $\pm$ 1.2   \\
{Co-teaching} {\footnotesize~\citep{han2018co}}  & {\footnotesize(NeurIPS18)} & ResNet-18 / 34      & 86.54 $\pm$ 0.1            & 79.98 $\pm$ 0.3            & 57.24 $\pm$ 0.6    & 45.69 $\pm$ 0.9   \\
{Peer loss}  {\footnotesize\citep{liu2020peer}} & {\footnotesize(ICML20)} & ResNet-18 / 34      & 88.19 $\pm$ 0.5            & 81.53 $\pm$ 0.7             & 63.82 $\pm$ 0.3   & 47.91 $\pm$ 0.5 \\
{CORES*}  {\footnotesize\citep{cheng2021learning}} & {\footnotesize(ICLR21)} & ResNet-18 / 34   & 89.67 $\pm$ 0.3            & 82.99 $\pm$ 0.5           & 64.86 $\pm$ 0.5    & 49.62 $\pm$ 0.7 \\
WarPI {\footnotesize~\citep{sun2021learning}} & {\footnotesize(PR22)} &ResNet-18 / 34 & 89.76 $\pm$ 0.4 & 87.57 $\pm$ 0.9 & 65.08 $\pm$ 0.6 & 57.38  $\pm$ 1.0 \\
{CDR} {\footnotesize\citep{xia2020robust}}   & {\footnotesize(ICLR21)}    & ResNet-18 / 34       & 90.41 $\pm$ 0.3           & 83.07 $\pm$ 1.3            & 67.33 $\pm$ 0.6    & 55.94 $\pm$ 0.5 \\
{Me-Momen.} {\footnotesize\citep{bai2021me}}  & {\footnotesize(ICCV21)}    & ResNet-18 / 34   & 90.86 $\pm$ 0.2           & 86.66 $\pm$ 0.9            & 68.11 $\pm$ 0.5    & 58.58 $\pm$ 1.2 \\
FaMUS  {\footnotesize\citep{Xu2021FaMUS}} & {\footnotesize(CVPR21)} & ResNet-18 / 34    & 91.23 $\pm$ 0.3  & 89.88 $\pm$ 0.6  & 66.65 $\pm$ 0.5   & 57.21 $\pm$ 1.2 \\
{PES} {\footnotesize\citep{bai2021understanding}} & {\footnotesize(NeurIPS21)}  & ResNet-18 / 34   & \underline{92.69 $\pm$ 0.4}          & 89.73 $\pm$ 0.5             & 70.49 $\pm$ 0.7  & 65.68 $\pm$ 1.4  \\ 
% SFT {\footnotesize\citep{wei2022self}} & {\footnotesize(ECCV22)}& ResNet-18 / 34  &91.41 $\pm$ 0.3 & 89.97 $\pm$ 0.5 & \textbf{71.83 $\pm$ 0.4} & \textbf{69.91 $\pm$ 0.5} \\
Late Stop {\footnotesize\citep{yuan2023late}} & {\footnotesize(ICCV23)} & ResNet-18 / 34 & 91.08 $\pm$ 0.2 &  87.41 $\pm$ 0.4 &  68.59 $\pm$ 0.7  & 59.28 $\pm$ 0.5 \\ 
PADDLES {\footnotesize\citep{huang2023paddles}} & {\footnotesize(ICCV23)} & ResNet-18 / 34 & \textbf{92.76 $\pm$ 0.3}  & \underline{89.87 $\pm$ 0.5}  & \underline{70.88 $\pm$ 0.6}         & \underline{66.11 $\pm$ 1.2} \\ 
EMLC {\footnotesize~\citep{taraday2023enhanced}} & {\footnotesize(ICCV23)} & ResNet-18
& 91.76    & 89.05     & 69.74     & 68.06 \\
\textbf{VRI} {\footnotesize(Ours)}  & & ResNet-18        & 92.13 $\pm$ 0.3  & \textbf{90.60 $\pm$ 0.4}  & \textbf{71.24 $\pm$ 0.2} & \textbf{68.17 $\pm$ 0.5} \\
\bottomrule[1.3pt]
\end{tabular}
}
}
\end{table*}

\textbf{Food-101N}~\citep{lee2018cleannet} is constructed based on the taxonomy of $101$ categories in Food-101~\citep{bossard2014food}. It consists of 310k images collected from Google, Bing, Yelp, and TripAdvisor. The noise ratio for labels is around 20$\%$. We select the validation set of 3824 as the meta-data. Following the testing protocol in~\citep{lee2018cleannet, zhangyikai2021learning}, we learn the model on the training set of 55k images and evaluate it on the testing set of the original Food-101. 

\textbf{ANIMAL-10N}~\citep{song2019selfie} contains human-labeled online images for 5 pairs of animals with confusing appearance. The estimated label noise rate is 8\%. There are 50,000 training and 5,000 testing images with the resolution of 64 $\times$ 64. We evaluate our model on the dataset without a clean meta set.

\begin{table*}[t]
  \centering
  \caption{Testing Accuracy (\%) on \textbf{real-world noise}, including Clothing1M and Food-101N.}
     \label{tab:cloth}
  \small
  \scalebox{0.8}{
  \begin{tabular}{lc|lc|lc|lc}
\toprule[1.1pt]
 \multicolumn{4}{c|}{Clothing1M}  &\multicolumn{4}{c}{Food-101N}\\ \midrule
{MWNet}{\footnotesize ~\citep{shu2019meta}}  & 73.72   
&{DivideMix}{\footnotesize~\citep{li2020dividemix}}  & 74.76
& {Base Model}  & 81.67
&$\text{CNet}_{\text{h}}${\footnotesize~\citep{lee2018cleannet}} & 83.47 \\

 ELR~{\footnotesize\citep{liu2020early}}  & 74.81 
& {CAL}{\footnotesize~\citep{Zhu_2021_CVPR}} & 74.17
& {MWNet}{\footnotesize~\citep{shu2019meta}}    & 84.72
& {SMP}{\footnotesize~\citep{han2019deep}}  & 85.11 \\

{PLC}{\footnotesize~\citep{zhangyikai2021learning}}       & 73.24
& {WarPI}{\footnotesize~\citep{sun2021learning}}       & 74.98
& {NRank}{\footnotesize~\citep{sharma2020noiserank}} & 85.20
&  ELR+ {\footnotesize\citep{liu2020early}}   & 85.77  \\

 JNPL{\footnotesize~\citep{kim2021joint}}          & 74.15
 & CoDis {\footnotesize\citep{xia2023combating}}  & 74.92
& {PLC}{\footnotesize~\citep{zhangyikai2021learning}}      & 85.28 
 &  {WarPI}{\footnotesize~\citep{sun2021learning}}       & 85.91\\

 NCR{\footnotesize~\citep{iscen2022learning}}  & 74.42 
 & \textbf{VRI} {\footnotesize(Ours)}      & \textbf{75.19} 
 & CoDis {\footnotesize\citep{xia2023combating}}  & 86.13
 & \textbf{VRI} {\footnotesize(Ours)}  & \textbf{86.24}  \\
\bottomrule[1.1pt]
\end{tabular}
  }
\end{table*}

\begin{table*}[t]
\caption{Average test accuracy on CIFAR80N, an \textbf{open-set} label noisy dataset, over the last 10 epochs.} 
\small
\centering
\scalebox{0.92}{
\setlength{\tabcolsep}{2mm}{
\begin{tabular}{lc|cccc}
\toprule[1.1pt]
\multicolumn{1}{l}{Methods} &    & \emph{Unif 20\%}    & \emph{Unif 50\%} & \emph{Unif 80\%}     & \emph{Flip 40\%} \\ \midrule
Cross-Entropy               & -              & 42.57   & 27.06  & 9.27      & 22.25 \\
Decoupling {\footnotesize \citep{malach2017decoupling} }                 & {\footnotesize NIPS 2017}      & 43.49   & -      & 10.01     & 33.74 \\
Co-teaching {\footnotesize \citep{han2018co} }                & {\footnotesize NIPS 2018}      & 60.38   & -      & 16.59     & 42.42 \\
Co-teaching+ {\footnotesize \citep{yu2019does}  }             & {\footnotesize  ICML 2019 }     & 53.97   & -      & 12.29     & 43.01 \\
JoCoR {\footnotesize \citep{wei2020combating}}                & {\footnotesize CVPR 2020}      & 59.99   & -      & 12.85     & 39.36 \\
MoPro  {\footnotesize \citep{li2021mopro} }   & {\footnotesize ICLR 2021}      & 65.60   & -      & 30.29     & 60.22 \\
Jo-SRC {\footnotesize \citep{yao2021jo}}     & {\footnotesize CVPR 2021}      & 65.83   & 58.51  & 29.76     & 53.03  \\
PNP-hard {\footnotesize \citep{sun2022pnp}}  & {\footnotesize CVPR 2022}      & 65.87   & -      & 30.79     & 56.17 \\ 
PNP-soft {\footnotesize \citep{sun2022pnp}}  & {\footnotesize CVPR 2022}     & 67.00   &  -     & \underline{34.36}     & \underline{61.23}  \\ 
USDNL {\footnotesize \citep{xu2023usdnl}}    & {\footnotesize AAAI 2023}      & \underline{71.54}   & \underline{63.98}  & 26.07     & -     \\ \midrule
\textbf{VRI}                         &            & \textbf{72.90} & \textbf{67.03} & \textbf{35.44} & \textbf{64.71}  \\ 
\bottomrule[1.1pt]
\end{tabular}
}
}
\label{tab:openset}
\end{table*}

\vspace{1mm}
\noindent\textbf{Noise settings}. We conduct experiments to study four types of corrupted labels. 1) For \textit{flip noise}, we randomly select a transition class for each class and form the label noise by flipping the label to the transition class with a certain probability $\rho$. 2) For \textit{uniform noise}, we independently change the label to a random class with a probability of $\rho$. 3) For \textit{instance-dependent (ID) noise}, we adopt the strategy in \citep{xia2020part} to construct the dataset with noise caused by the uncertain annotation of the ambiguous observation. 4) For \textit{real-world noise}, different from the above synthetic noise, it is introduced at the stage of data collection in the real world with diverse forms of noise. 5) For \textit{openset noise}, we adopt CIFAR-80N, which provides a training set with out-of-distribution samples.
For flip, uniform, and ID noise, we conduct experiments under variant settings of noise ratios on CIFAR-10 \& 100, where $\rho \in \{0.2, 0.4, 0.6\}$. 

\vspace{1mm}
\noindent\textbf{Network architectures}. The architecture of the classification network affects the performance. We present the result with different backbones in the following comparison experiments. 
Following the setting in~\citep{ren2018learning,shu2019meta,zhangyikai2021learning}, we adopt ResNet-18\&32\&34~\citep{resnet}, Wide ResNet-28-10~\citep{zagoruyko2016wide}, and ResNet-50~\citep{resnet} in the following experiments. Note that ResNet-32 is a tiny model which is much slimmer than ResNet-18/34. We implement the meta-network and prior network as the three-layer fully-connected network whose dimension for hidden layers is set as $1024$. Since its inputs are the feature embedding concatenated with the one-hot label vector, the input dimension is $k+c$, where $k, c$ are the dimension of the feature embedding and the number of categories, respectively. Besides, the dimension of the output layer of the meta-network is $2c$. For the meta-network of $V_\phi(\cdot)$ and the prior-network of $H_\omega(\cdot)$, all models share the same architecture, as in Table \ref{tab:network}.

\vspace{1mm}
\noindent\textbf{Other hyperparameters}. The weight coefficient $\lambda$ for the KL term is set to be 0.001 for all experiments. Its sensitivity for the generalization performance is analyzed in the ablation study. The sampling number of $k$ is set as $2$ for CIFAR-10 \& 100 and $1$ for Clothing1M, Food-101N, and ANIMAL-10N. For the prior and meta networks, we select the Adam optimizer and set the learning rate as 3e-4 for all experiments. We adopt the CosineAnnealing strategy for adjusting the learning rate of the classification network on CIFAR-10 \& 100. 
Settings of other hyperparameters for the classification network are listed in Table \ref{tab:params}.

\subsection{Comparison results}
\noindent\textbf{Synthetic Noise}. 
We evaluate the model on two basic benchmark datasets, \textit{i.e.}, CIFAR-10 and CIFAR-100 of classification tasks. We study variant settings of types and ratios of label noise. For flip \& ID noise, we present results with the setting of $20\%$ and $40\%$ noise ratios. For uniform noise, we choose a more challenging setting of $40\%$ and $60\%$ ratios. For a fair comparison, the settings of generating noisy data and network architectures are consistent for all methods. The comparison baseline methods include the Base Model that is directly trained on corrupted data, 
and other prevailing approaches (\textit{e. g.}, 
DivideMix~\citep{li2020dividemix}, ELR~\citep{liu2020early}, MentorNet~\citep{jiang2018Mentornet}, CORES*~\citep{cheng2021learning}, SFT \citep{wei2022self}, GSS-SSL \citep{yu2023prevent}, PES \citep{bai2021understanding}, CoDis \citep{xia2023combating}, SOP \citep{liu2022robust}, Late Stopping \citep{yuan2023late}, PADDLES \citep{huang2023paddles} and Me-Momentum \citep{bai2021me}), and meta-learning methods including MSLC~\citep{yichen2020softlabel}, MW-Net~\citep{shu2019meta}, PMW-Net~\citep{zhao2021probabilistic}, MLC~\citep{wang2020training}, FSR \citep{zhang2021learning}, FasTEN \citep{kye2022learning}, EMLC \citep{taraday2023enhanced} and FaMUS \citep{Xu2021FaMUS}. Note that other works \citep{li2019learning} with fewer fixed transition patterns for flip noise have not been included. To illustrate the effectiveness of the variational form of learning to rectify loss functions, we also compare the method with the homogeneous MC approximation model, WarPI \citep{sun2021learning}.

%As shown in Table \ref{tab:flip}, \ref{tab:uniform}, and \ref{tab:instance}, VRI outperforms SOTA meta-learning methods on the classification task and achieves superior performance under the setting of flip noise with ResNet-32.
%%%% By using boosting techniques, we highlight that VRI$^*$ achieves the best performance on three types of noise on variant ratios.
%We would like to highlight that our method gains significant improvement of $\mathbf{4.56\%}$ on CIFAR-100 with $20\%$ ID noise compared with the SOTA method of PADDLES. Besides, VRI consistently outperforms the homologous method of WarPI, indicating the superiority of our variational modeling. 

As shown in Table \ref{tab:flip}, \ref{tab:uniform}, and \ref{tab:instance}, compared to other advanced meta-learning-based methods, our method, VRI, shows superior performance in addressing Flip and Instance-dependent label noise. Notably, when compared to the state-of-the-art approach, EMLC \citep{taraday2023enhanced}, our VRI method achieves an improvement of 2.29\% on CIFAR-10 and 5.08\% on CIFAR-100 under 20\% Flip label noise. Besides, VRI consistently outperforms the homologous method of WarPI, indicating the superiority of our variational modeling.

\begin{figure*}[t]
\centering
\vspace{-3mm}
\subfigure[The loss distribution of training data]{
\begin{minipage}[b]{0.4\linewidth}
\centering
\includegraphics[width=1\linewidth]{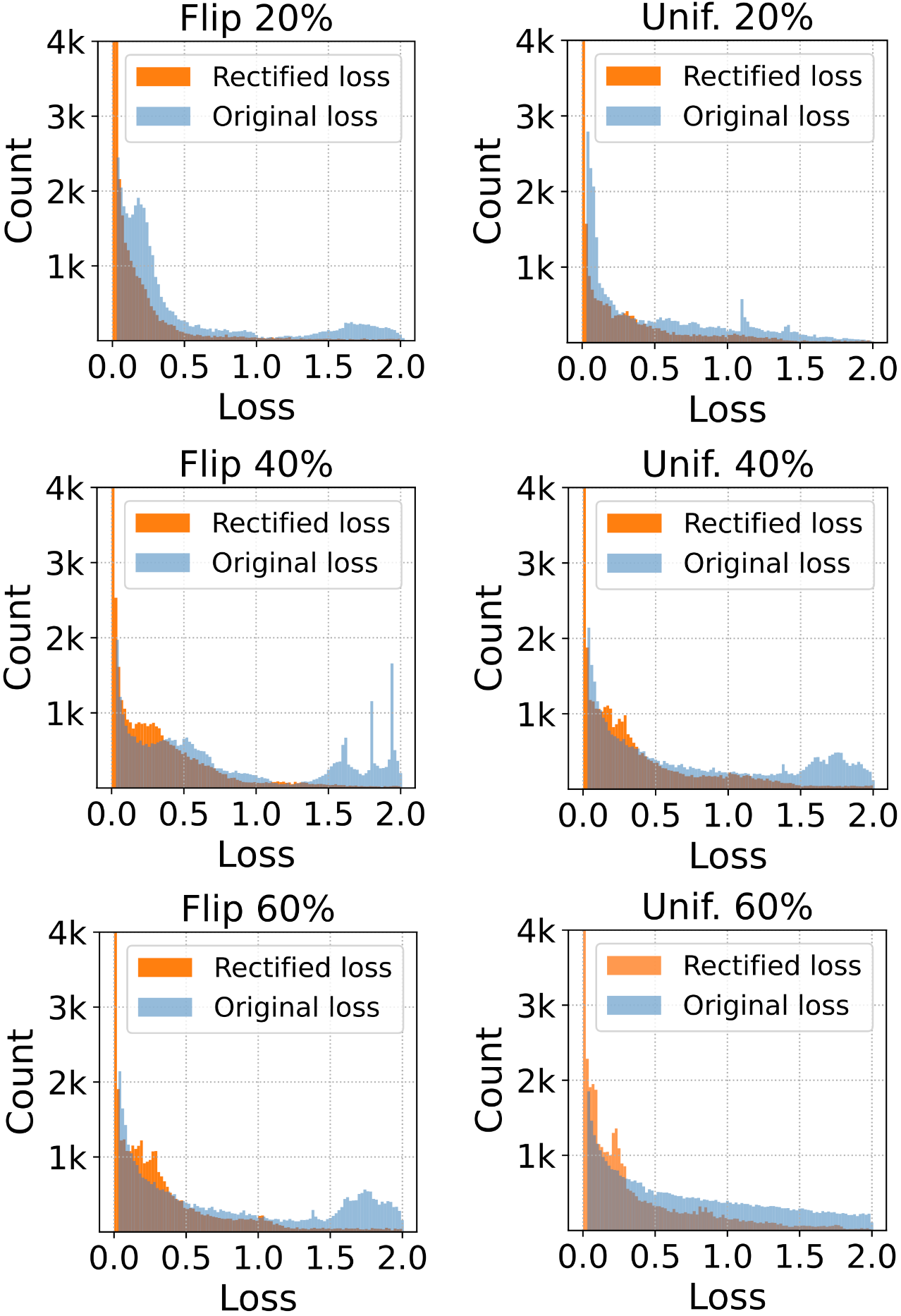}
\end{minipage}}
\hspace{10pt}
\subfigure[Estimation of noise transition matrix]{
\begin{minipage}[b]{0.45\linewidth}
\centering
\includegraphics[width=0.99\linewidth]{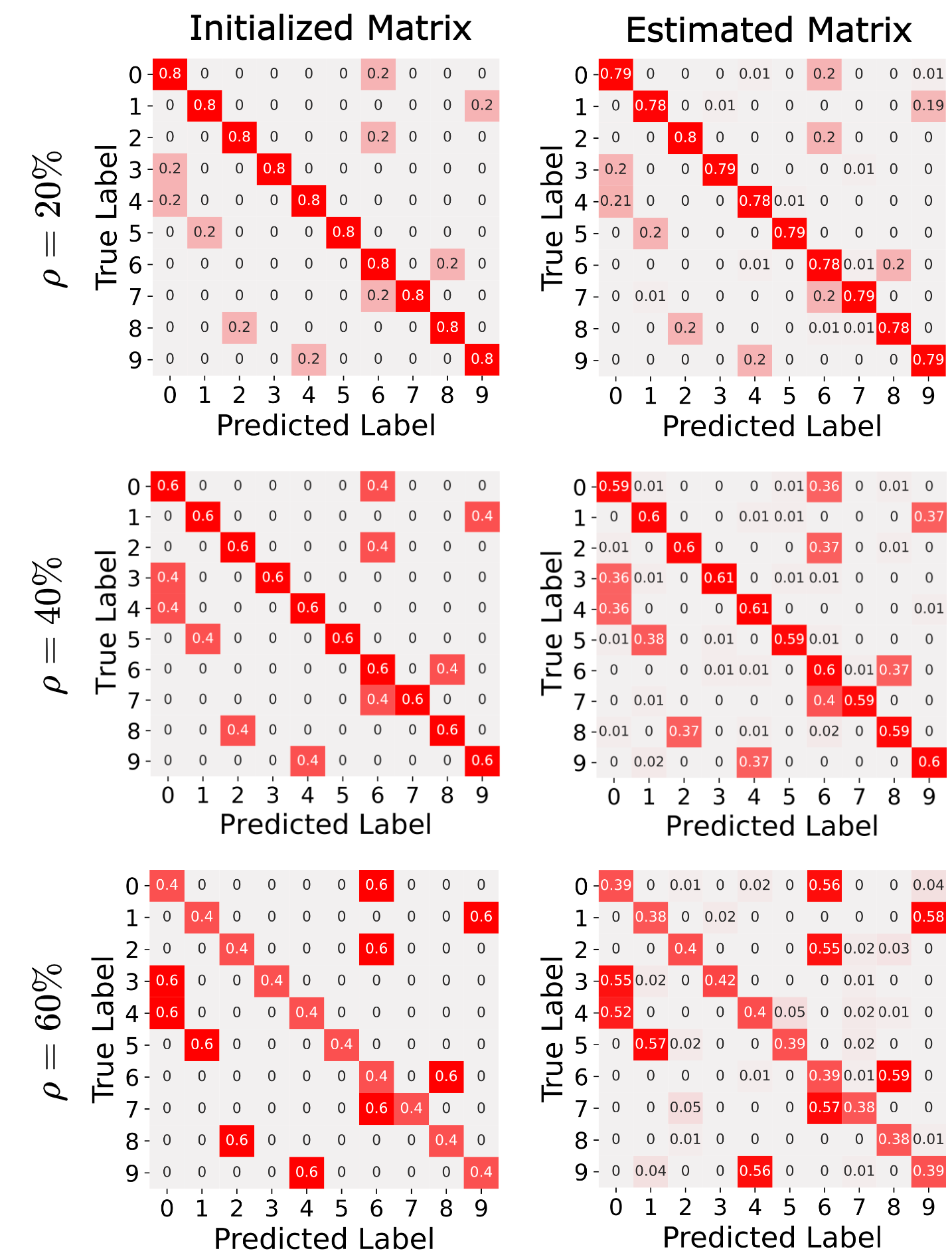}
\end{minipage}}
\caption{(a) As the noise ratio rises, the rectification effect becomes more obvious since the area of the original loss increases. (b) We almost achieve the unbiased estimation for the initialized transition matrix of flip noise with varying noise ratio $\rho$.}
\vspace{-2mm}
\label{fig:vis}
\end{figure*}

% \begin{figure}[]
% \centering
% \includegraphics[width=.85\linewidth]{ collapse.pdf} 
% \caption{Model collapse of MC approximation.}
% \label{fig:collapse}
% \end{figure}

% \begin{figure}[]
% \centering
% \includegraphics[width=.85\linewidth]{ collapse.pdf} 
% \caption{Model collapse of MC approximation.}
% \label{fig:collapse}
% \end{figure}

\begin{figure}[t]
\centering
\includegraphics[width=0.45\linewidth]{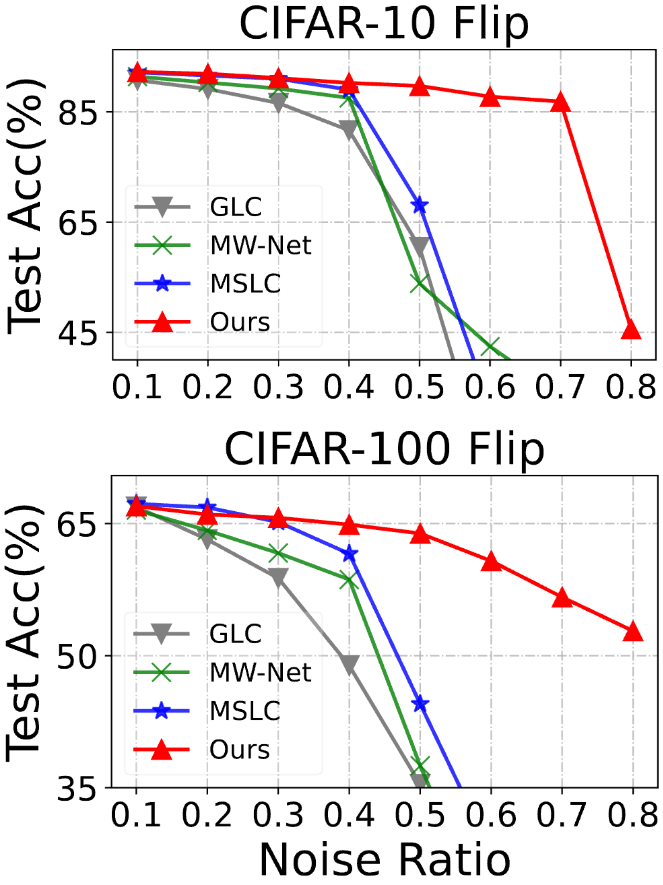}
\includegraphics[width=0.46\linewidth]{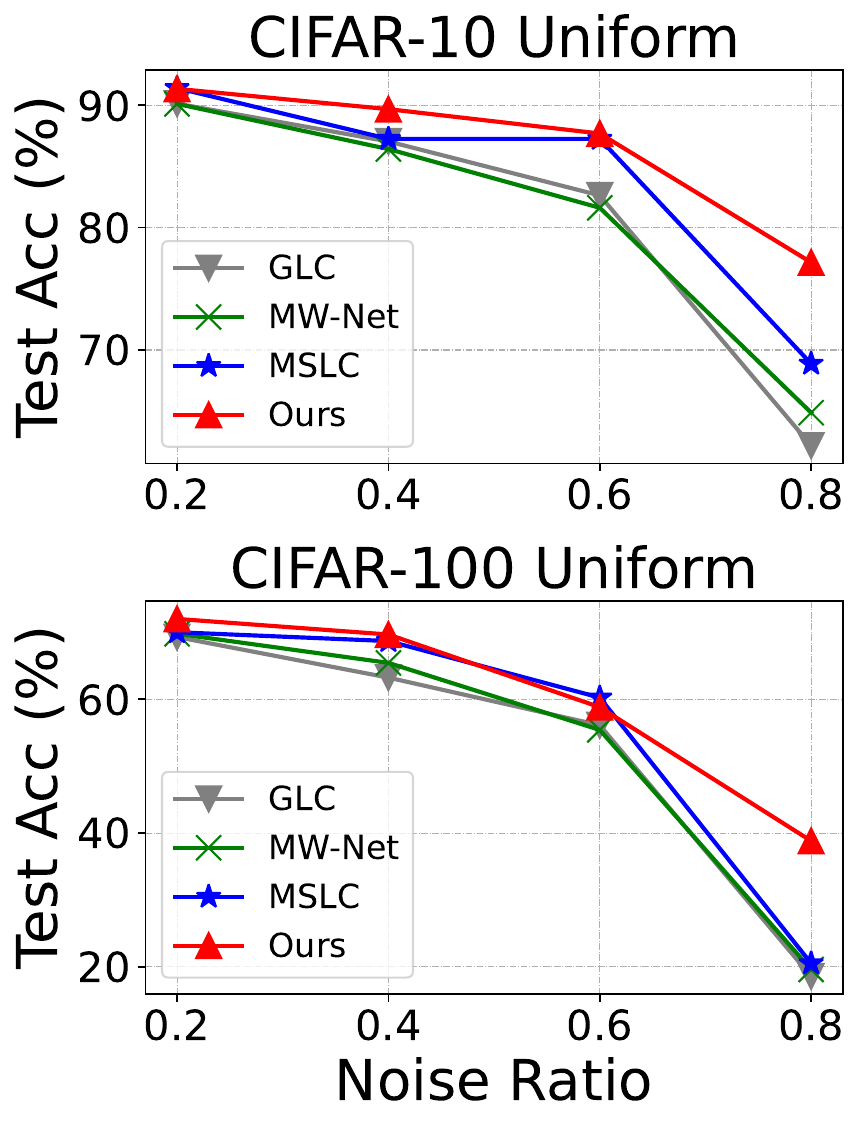}
\caption{The performance as the noise ratio increases is compared. VRI continues to deliver good performance even at significantly higher noise ratios.}
\label{fig:robustness}
\end{figure}

\begin{figure}[]
\centering
\includegraphics[width=.95\linewidth]{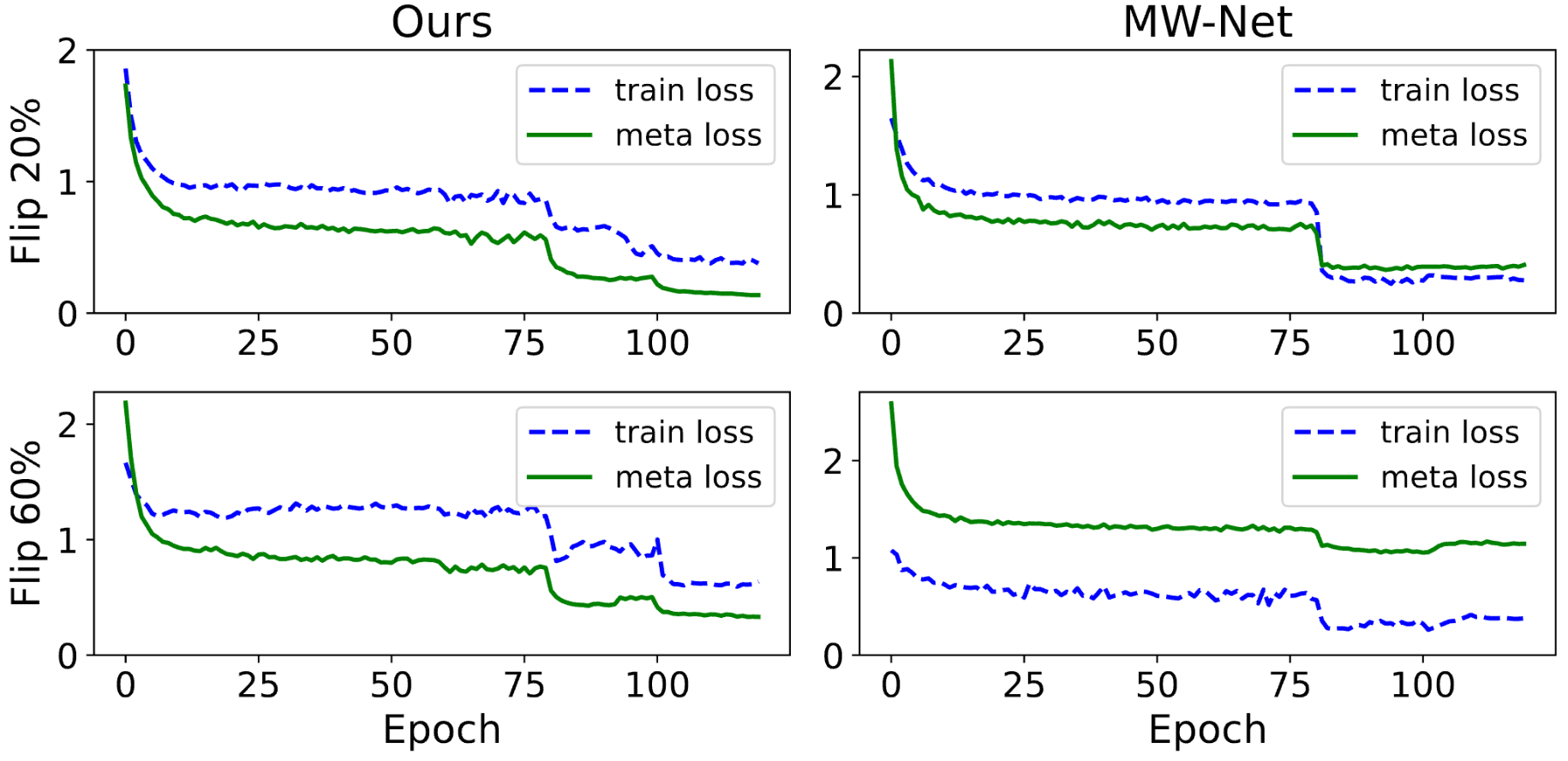} 
\caption{Our algorithm achieves a stable convergence and displays robustness on flip noise with a high ratio (e.g., $60\%$).}
\label{fig:more_ana2}
\end{figure}

\vspace{1mm}
\noindent\textbf{Real-world Noise}. To evaluate the performance on real-world noise, we conduct experiments on two large-scale real-world datasets, \textit{i.e.}, Clothing1M, and Food-101N, and choose the clean validation set as meta-data. For the fair comparison, we adopt the same evaluation protocol in~\citep{shu2019meta,zhangyikai2021learning} and use the same backbone of ResNet-50 pre-trained on ImageNet. We compare VRI with current SOTA methods. As shown in Table \ref{tab:cloth}, the proposed VRI achieves the highest accuracy of 75.19\% on Clothing1M and 86.24\% on Food-101N, consistently outperforming the homogeneous MC approximation method (\textit{e.g.}, WarPI). VRI also gains a large improvement of 1.4\% on Clothing1M and 1.5\% on Food-101N compared with other meta-learning methods (\textit{e.g.}, MW-Net), demonstrating its great effectiveness in real-world application.

Indeed, even for the state-of-the-art methods (\textit{e.g.}, DivideMix, ELR), they inevitably involve hyper-parameters and require a clean set (10$\%$ of training data, 5k samples of CIFAR) for cross-validation (CV). Our method is proposed to learn an adaptive rectifying strategy in a data-driving way, resolving the issue of \textit{scalability} in CV.

\vspace{1mm}
\noindent\textbf{Open-set Noise}. To evaluate the performance of VRI on open-set noise conditions, we follow the work \citep{yao2021jo} and manually construct this type of label noise, named CIFAR80N. Specifically, we first regard the last 20 categories in CIFAR100 as out-of-distribution ones. Then we create in-distribution noisy samples by randomly corrupting $\rho$ percentage of the remaining sample's labels. This finally leads to an overall noise ratio $\rho_{all} = 0.2 + 0.8 \rho$. 

We compare VRI with typical LNL methods and three methods of learning with openset noise \citep{yao2021jo,sun2022pnp,xu2023usdnl} on CIFAR-80N. As shown in Table \ref{tab:openset}, the performance of our VRI significantly surpasses that of the baseline methods. Notably, VRI achieves a 3.48\% improvement over the state-of-the-art method under the Flip 40\% noise condition.

\subsection{Further Analysis}

\noindent\textbf{Effectiveness}. To directly visualize the effect after rectification, we plot the distribution of training losses for all samples in Figure \ref{fig:vis} (a) when finishing the training process. The blue part represents the original loss without rectification, while the orange is for the loss computed from the rectified logits using our meta-network. As shown in Figure \ref{fig:vis} (a), the rectified loss is lower than the original one with high probability. 
The area of the original loss increases as the noise ratio rises, indicating the effect of rectification becomes more obvious.
To further illustrate its effectiveness, we adopt the prediction from the rectified logits as the clean label to estimate the transition matrix for constructing flip noise. We draw the initialized and estimated transition matrices for 20\%, 40\%, and 60\% ratios on CIFAR-10 in Figure \ref{fig:vis} (b). We almost achieve the unbiased estimation for the initialized matrix. 

To demonstrate the effectiveness in preventing model collapse, we conducted experiments on CIFAR-10 under 70\% uniform noise and plotted the norm of the Gaussian variance of the posterior in Fig. \ref{fig:collapse}. The norm of variance for the rectification vector of MC degenerates to zero in some cases, indicating a collapse to a deterministic model. This is also reflected in the degraded generalization performance of the test accuracy. In contrast, for VRI, the norm of the Gaussian variance remains stable around 3.

\begin{figure}[]
\centering
\includegraphics[width=.85\linewidth]{ 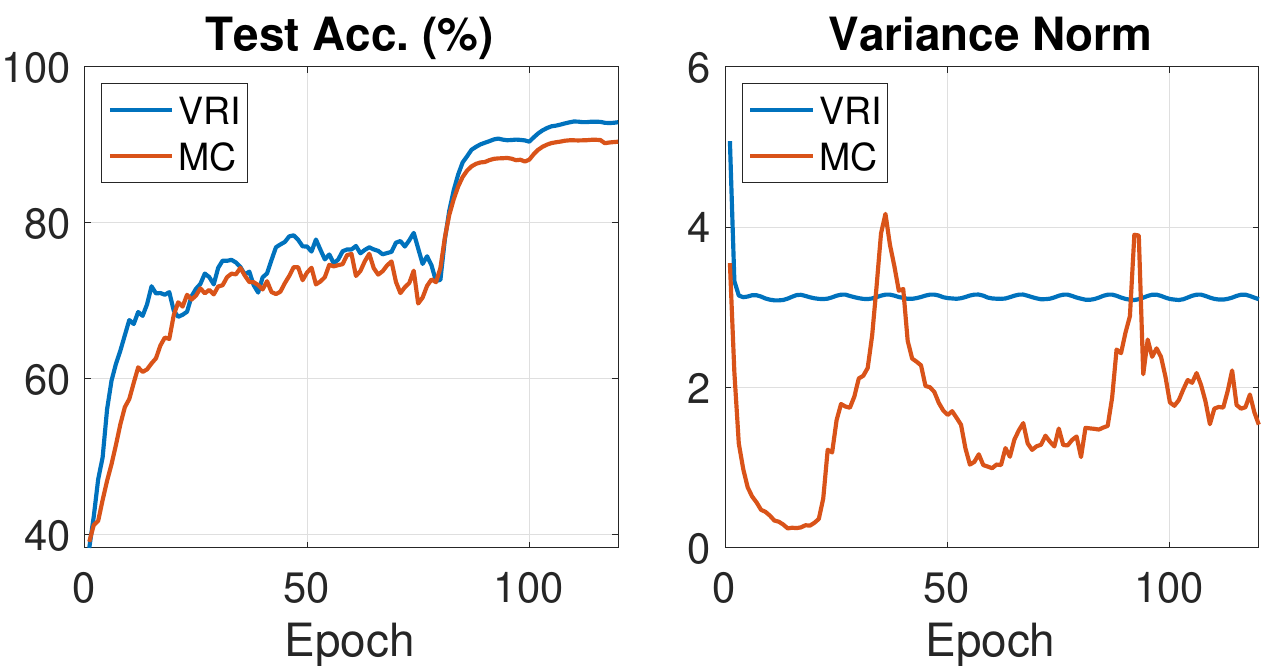} 
\caption{Model collapse of MC approximation.}
\label{fig:collapse}
\end{figure}

\noindent\textbf{Robustness}. 
We evaluate the generalization ability of VRI on more challenging conditions with high flip noise ratios. We compare VRI with three typical meta-learning methods, \textit{i.e.}, GLC~\citep{hendrycks2018using} of loss correction, MW-Net~\citep{shu2019meta} of reweighting, MSLC~\citep{yichen2020softlabel} of label correction. We adopt the same backbone network of ResNet-32 and a consistent setting of 1,000 meta samples. As shown in Figure \ref{fig:robustness}, VRI can still produce favorable results, even in challenging conditions with a far higher noise ratio. Compared with the SOTA meta-learning methods, VRI can retain the high accuracy of 86\% on CIFAR-10 with the setting of 70\% noise ratio.

We also plot Figure \ref{fig:more_ana2} about the training and meta-loss to explain this phenomenon. For other meta-learners (\textit{e.g.}, MW-Net), their meta-network might have limited ability to conduct the meta-learning process with a high ratio of flip noise. As the noise rate exceeds 50\%, the learning process in MW-Net is dominated by the classification network, where the empirical error decreases rapidly but the meta error still remains high. This renders non-convergence for optimizing the meta-network, leading to poor generalization performance. For MSLC, the backbone needs warm-up with the training data, which certainly degenerates the performance for high noise ratios. For VRI, our meta-network is powerful enough to rectify the training process by taking the feature and label as input and generating an effective rectifying vector, which is endowed with robustness to flip noise with high ratios.

\begin{figure}[t]
    \centering
    \includegraphics[width=0.98\linewidth]{ 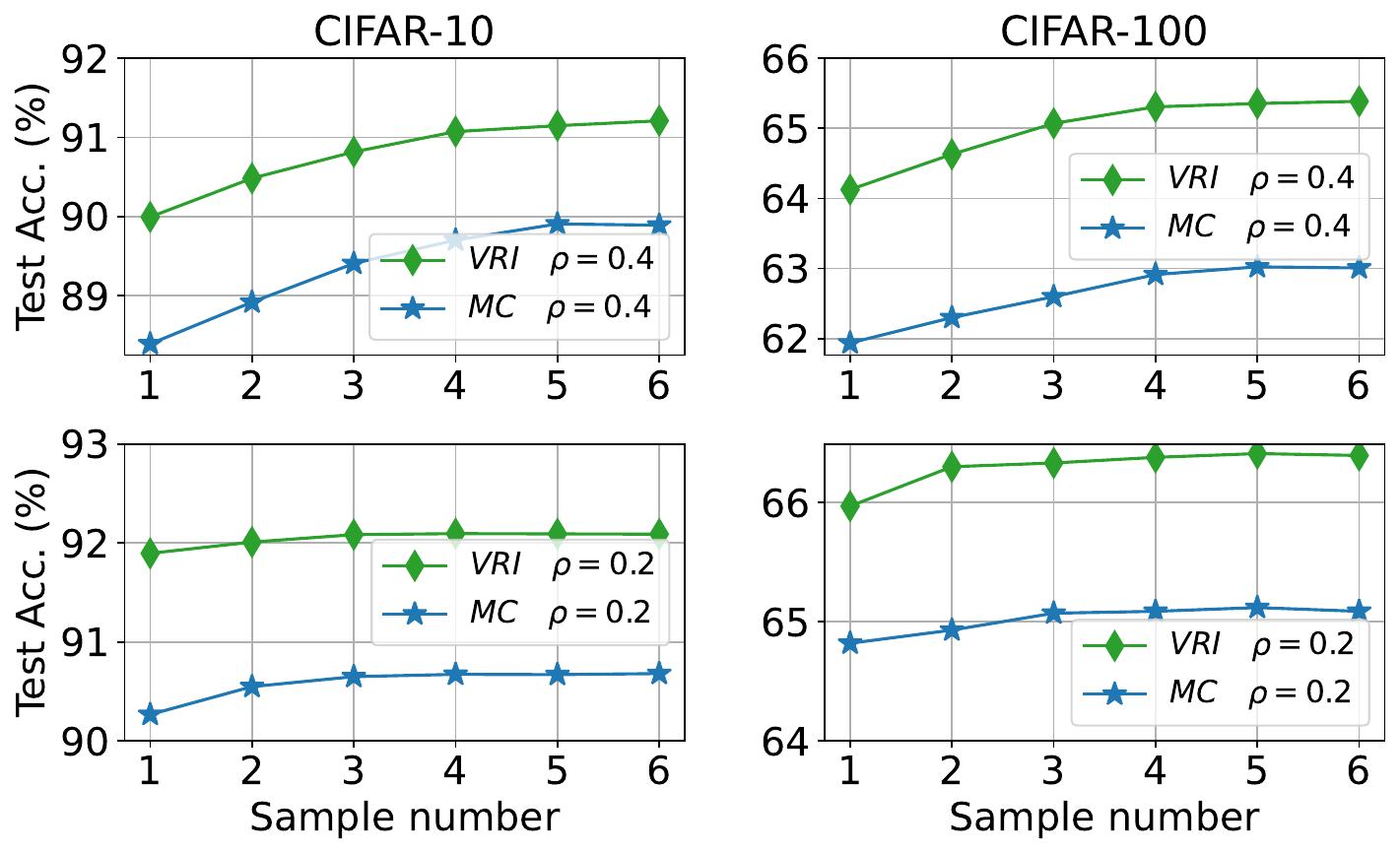}
    \caption{MC method is more sensitive about the sample number compared with our proposed VRI.}
    \label{fig:number_k}
\end{figure}

\subsection{Ablation Study} 

\noindent\textbf{Sampling number}. The sampling number $k$ in the Monte Carlo (MC) approximation has an impact on performance. We conduct experiments on CIFAR-10 and CIFAR-100 with variant $k$ for two flip noise ratios. As shown in Figure \ref{fig:number_k}, the testing accuracy for MC essentially turns out to be higher, then keeps stable as the sample number increases. Despite the gain of the performance from more samples, the training time increases linearly as illustrated in Table \ref{tab:efficiency}. Thanks to the variational term in VRI, we achieve higher accuracy than the MC approximation while keeping good efficiency.

\begin{table}[t]
    \centering
    \small
    \caption{VRI yields higher performance than MC approximation with an efficient inference.}
    \begin{tabular}{l|ccc}
    \toprule[0.9pt]
      & $\bm k$     & Time (min./epoch)  & Test Acc. (\%) \\ \midrule 
             &1         & 2.17               & 88.23  \\
MC           &3         & 4.32               & 89.45  \\
             &5         & 7.04               & 89.87  \\ \midrule
VRI          &1         & 2.20               & 90.20  \\
    \bottomrule[0.9pt]
\end{tabular}
\label{tab:efficiency}
\end{table}

\begin{table}[t]
    \centering
    \small
    \caption{Test accuracy (\%) of different architectures of the meta-net on CIFAR-10 Flip 40\%. $\mathcal{C}$ and $\hat{\mathcal{C}}$ denote the number of classes and the dimension of the embedding vector of the label, respectively.}
    \vspace{-2mm}
    \begin{tabular}{l|c}
    \toprule[1pt]
       Structure: $\{512+\hat{\mathcal{C}},h_1,...,h_n,\mathcal{C}\}$  & Accuracy \\ \midrule
    $\{512+\hat{\mathcal{C}},\, 128,\, \mathcal{C}\}$   &  89.76     \\
    $\{512+\hat{\mathcal{C}},\, 512,\, 512,\, \mathcal{C}\}$   &  90.69     \\
    $\{512+\hat{\mathcal{C}}, 1024, 512,\, \mathcal{C}\}$ {\small (ours)}   & 91.21 \\
    $\{512+\hat{\mathcal{C}}, 1024, 1024, \mathcal{C}\}$    & 90.89 \\
    $\{512+\hat{\mathcal{C}},\, 1024,\, 1024,\, 512,\,512,\, \mathcal{C}\}$                & 90.61   \\
    \bottomrule[1pt]
    \end{tabular}
\label{tab:metanet_structure}
\end{table}

\begin{table}[t]\label{tab:activation}
\caption{Test accuracy (\%) of different activation functions in the last layer of the meta-net on CIFAR-10 Unif 40\%. A ResNet-18 is used.}
\centering
\small
\setlength{\tabcolsep}{5mm}{
\begin{tabular}{l|c}
\toprule[1.1pt]
\multicolumn{1}{l|}{Homogeneous functions}     & Accuracy    \\ \midrule
Sigmoid  {\small (ours)}      & 91.29  \\
Tanh                           & 88.17\\
w/o Sigmoid                    & 86.90\\
\bottomrule[1.1pt]
\end{tabular}
}
\label{tab:output_layer}
\end{table}

\begin{table}[t]
\caption{Performance comparison (\%) of VRI and its correspondingly non-Bayesian version.} 
\centering
\small
\setlength{\tabcolsep}{3mm}{
\begin{tabular}{c|cc}
\toprule[1.1pt]
\multicolumn{1}{c|}{Noise}  & Method  & Test Acc.    \\ \midrule
CIFAR-10     & VRI                       & \textbf{91.29}  \\
Unif. 40\%   & Non-Bayesian VRI          & 89.27 \\ \midrule
CIFAR-100    & VRI                       & \textbf{68.92}  \\
Unif. 40\%   & Non-Bayesian VRI          & 66.96  \\ \midrule
CIFAR-10      & VRI                       & \textbf{90.60}  \\
Inst. 40\%                                        & Non-Bayesian VRI          & 87.01  \\ \midrule
CIFAR-100    & VRI                       & \textbf{68.17}  \\
Inst. 40\%                                        & Non-Bayesian VRI          & 64.92  \\ \bottomrule[1.1pt]
\end{tabular}
}
\label{tab:non_bayesian}
\end{table}

\begin{figure*}[t]
\centering
\subfigure[Sensitivity analysis of $\lambda$]{
\begin{minipage}[b]{0.28\linewidth}
\includegraphics[width=1\linewidth]{ 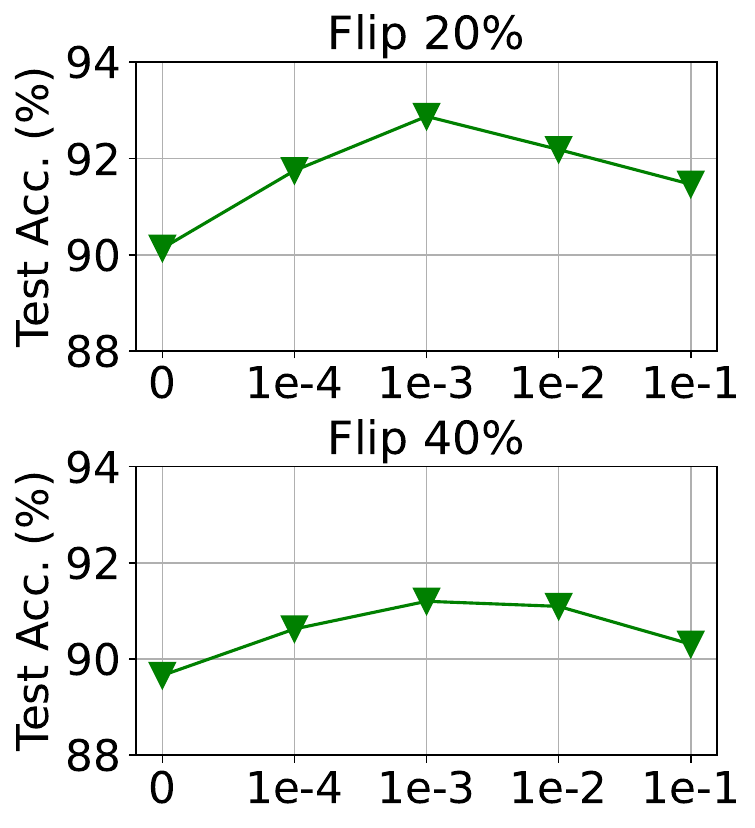}
\end{minipage}}
\hspace{5pt}
\subfigure[Sensitivity analysis of the number of the meta set]{
\begin{minipage}[b]{0.59\linewidth}
\includegraphics[width=1.0\linewidth]{ 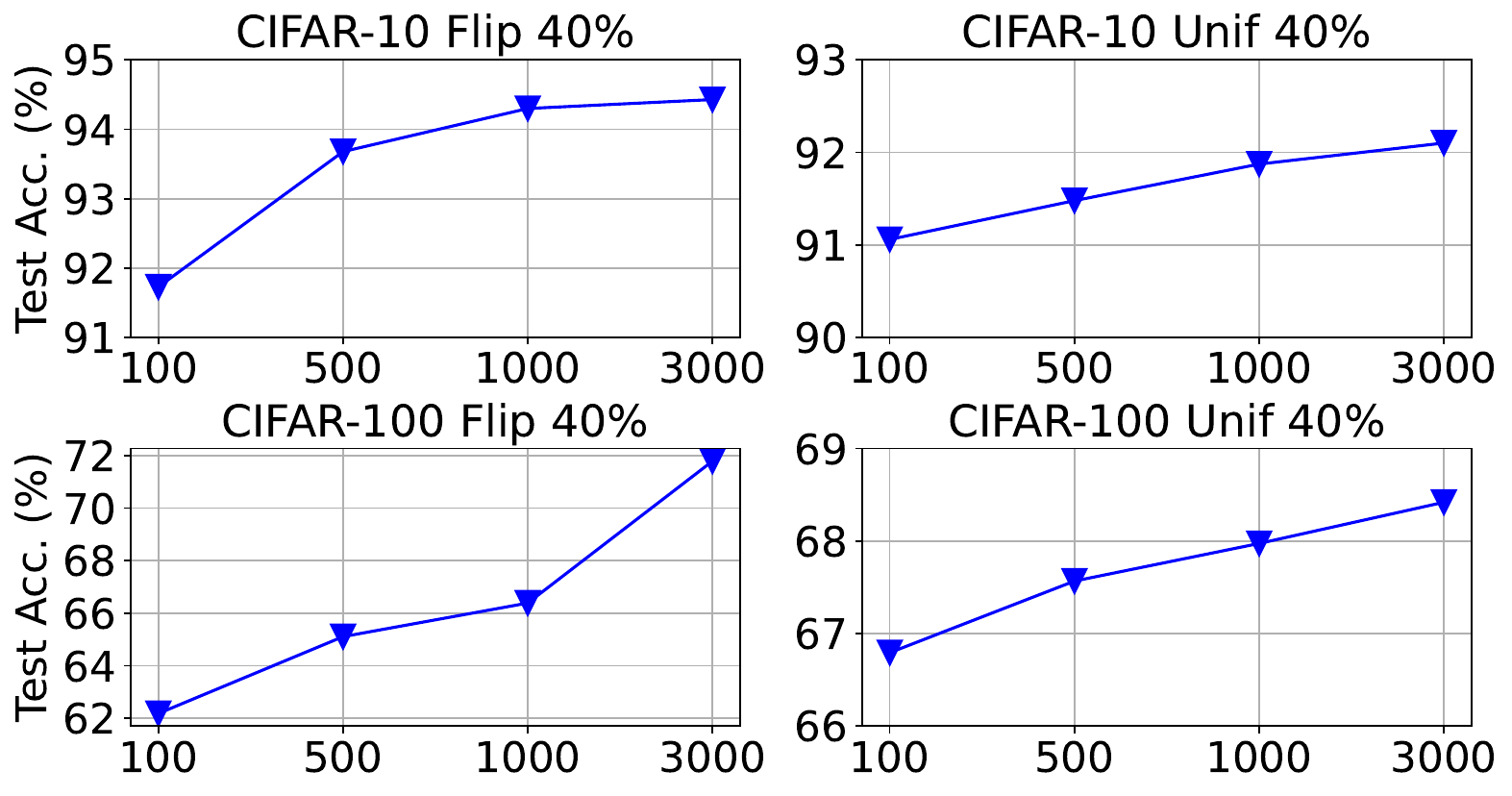}
\end{minipage}}
\caption{(a) We obtain the best performance on with $\lambda$ = 0.001. (b) The performance improves as the number of meta samples increases.}
\label{fig:sensitivity}
\end{figure*}

\vspace{1mm}
\noindent\textbf{Architectures and activation functions of the meta net}. The architecture of the meta-network has effects on performance as variations in depth and width lead to differing approximation abilities. As depicted in Table \ref{tab:metanet_structure}, 
the model achieves the best performance with two hidden layers containing 1024 and 512 neurons, potentially due to the limited capacity of smaller meta-networks and the risk of overfitting associated with larger architectures. The activation function in the final layer also significantly impacts performance. We replaced the Sigmoid function with tanh. As illustrated in Table \ref{tab:output_layer}, the model utilizing Sigmoid outperforms the one employing tanh. This discrepancy may stem from the larger scale of the output produced by tanh.

\begin{table*}[t]
	\centering
	\caption{Testing Accuracy (\%) on CIFAR-10 and CIFAR-100 with \textbf{uniform noise} (top) and \textbf{flip noise} (bottom) when the accessibility of meta-data is restricted.}
	\label{tab:unif_nometa}
\small
\scalebox{0.93}{
	\begin{tabular}{c|l|c|cc|cc|c}
		\toprule[1.1pt]
		&\multicolumn{2}{l|}{Dataset}    & \multicolumn{2}{c|}{CIFAR-10}                         & \multicolumn{2}{c|}{CIFAR-100}                   \\ \midrule
		&\multicolumn{1}{l|}{Noise Ratio}     &\multicolumn{1}{l|}{Structure}                 & \multicolumn{1}{c}{40\%} & \multicolumn{1}{c|}{60\%} & \multicolumn{1}{c}{40\%} & 60\%   & Average gap     \\ \midrule
\multirow{5}{*}{\rotatebox{90}{\textbf{Uniform}}}			&{Baseline}         &ResNet-18               & 68.07$\pm$1.23                & 53.12$\pm$3.03                & 51.11$\pm$0.42                & 30.92$\pm$0.33   &26.0 \\
		&\textbf{VRI} (-)         &ResNet-18               & 80.23$\pm$1.42                & 74.54$\pm$2.46                & 59.39$\pm$0.73                & 49.39$\pm$0.46  & 10.1 \\
		&\textbf{VRI} (+)         &ResNet-18               & 91.24$\pm$1.42                & 87.45$\pm$2.16                & 66.39$\pm$0.44                & 58.60$\pm$0.56 & 1.0  \\
		&\textbf{VRI} (Aug.)         &ResNet-18               & 90.78$\pm$1.56                & 87.98$\pm$2.12                & 66.78$\pm$0.68                & 58.79$\pm$0.76  &0.8 \\
		&\textbf{VRI} (Standard)  &ResNet-18       & \textbf{91.58$\pm$0.17}   & \textbf{88.68$\pm$0.22}       & \textbf{67.92$\pm$0.19}       & \textbf{59.32$\pm$0.31} & 0\\ \midrule[1.1pt]
\multirow{5}{*}{\rotatebox{90}{\textbf{Flip}}}	&  {Baseline}  &ResNet-32   & 76.83$\pm$0.32     & 70.77$\pm$2.31   & 50.86$\pm$0.27      & 43.01$\pm$1.16 & 18.2\\
	&\textbf{VRI} (-)         &ResNet-32                & 82.23$\pm$1.06                & 80.34$\pm$1.96                & 58.47$\pm$0.78                & 55.78$\pm$0.45  & 9.4 \\
	&	\textbf{VRI} (+)         &ResNet-32                & 90.88$\pm$1.16                & 90.36$\pm$1.84                & 65.47$\pm$0.81                & 64.36$\pm$0.55  & 0.8 \\
	&	\textbf{VRI} (Aug.)         &ResNet-32                & 91.11$\pm$1.12                & 90.34$\pm$1.87                & 65.67$\pm$0.98                & 64.24$\pm$0.56   & 0.5\\
	&	\textbf{VRI} (Standard)  &ResNet-32      & \textbf{91.93$\pm$0.14}    & \textbf{91.21$\pm$0.33}   & \textbf{66.03$\pm$0.21} & \textbf{65.04$\pm$0.38} & 0  \\
		\bottomrule[1.1pt]
	\end{tabular}
}
\end{table*}

\begin{table*}[]
  \centering
  \caption{Testing accuracy (\%) of VRI without given meta-data on ANIMAL-10N.}\label{tab:animal}
  \small
\scalebox{0.8}{
\begin{tabular}{cccccccc}
\toprule[1pt]
Baseline   & \citet{song2019selfie}  & \citet{zhangyikai2021learning}  & \citet{chen2021boosting} & \citet{englesson2021generalized} &  \citet{chen2022compressing}& \textbf{VRI} (-) & \textbf{VRI} (+) \\ \midrule
79.4        & 81.8 & 83.4 &84.1  & 84.2  & 84.5 & 81.4 & \textbf{85.8}\\
\bottomrule[1pt]
\end{tabular}
}
\end{table*}

\vspace{1mm}
\noindent\textbf{The cardinality of the meta set}. The cardinality of the meta set has an impact on the performance. We set it to 1,000 for CIFAR datasets as other meta-learning methods (\textit{e.g.}, MWNet, MSLC). We also study the influence in Figure \ref{fig:sensitivity} (b). The performance improves as the number of meta samples increases, especially for flip noise. Also, VRI can obtain considerable performance (91.07$\%$, CIFAR-10, flip 40$\%$) given limited meta samples (100). Here, the backbone is ResNet-18.

\vspace{1mm}
\noindent\textbf{Hyper-parameter discussion}. To illustrate the sensitivity of $\lambda$, we conduct experiments on CIFAR-10 under flip noise. As shown in Figure \ref{fig:sensitivity} (a), we obtain the best performance with $\lambda=0.001$. The accuracy would slightly drop as $\lambda$ increases. Indeed, we observe that the KL divergence of the variational term usually produces a large value at the beginning of the training, which would lead to an unstable learning process. Therefore, we set $\lambda$ as $0.001$ via cross-validation. The result also demonstrates that we can gain considerable improvement in performance by introducing the variational term.

\noindent
\textbf{Compared with a non-Bayesian form}. We eliminated the prior network and replaced the meta-network with one that directly generates the rectification vector $\mathbf{v}$. This vector is subsequently multiplied by the features computed by the classification network, leading to a non-Bayesian rectification process. We compared the performance of this framework with our proposed VRI and have documented the results in Table \ref{tab:non_bayesian}. In settings of Unif. 40\% and Inst. 40\%, our proposed VRI, a Bayesian noise-robust framework, consistently outperforms non-Bayesian methods in test accuracy on CIFAR-10 \& 100. These findings highlight the advantages of a Bayesian rectification approach in countering the negative impact of noisy labels.

\subsection{Learning without Meta-Data}
To evaluate the performance of the model when there is a lack of clean meta-data, we adopt the sample selection strategy~\citep{han2018co} to select reliable samples in the corrupted training set and treat them as pseudo meta-data. Specifically, we firstly conduct warming-up (CIFAR-10: 10 epochs. CIFAR-100: 30 epochs. ANIMAL-10N: 100 epochs) for the classification network to achieve the basic discrimination ability. Then, we apply the small-loss strategy and select 1,000 samples with higher confidence for each epoch. Next, we train our meta-network with the selected samples by using the proposed learning Algorithm \ref{alg:1}. The whole process can be summarized as Algorithm \ref{alg:2} in the Appendix \ref{app:b}.

For synthetic noise, the class distribution has an impact on the performance. We conduct two experiments. a) ``\textbf{VRI ($+$)}", balancing the class of selected metadata; b) ``\textbf{VRI ($-$)}", directly using the selected samples with the top 1,000 smallest losses. We observe that classes of the latter are extremely imbalanced. Besides, data augmentation \footnote{We utilize a robust image augmentation policy, RandAugmentMC \citep{cubuk2020randaugment}. During each training iteration, two strategies are randomly selected for image transformation. Importantly, strong augmentation is applied exclusively to the (noisy) training data, and not to the meta data.}  can also relieve the class-imbalanced issue. We select all training samples with the smaller loss via Gaussian Mixture Model clustering and use mixup to enhance training $\slash$ meta-data. 

As shown in Table \ref{tab:unif_nometa}, the performance heavily degenerates with imbalanced pseudo meta-data. Besides, the meta-learning framework without meta-data still outperforms the baseline that is directly trained on the noisy dataset and achieves favorable performance.

For real-world noise, VRI achieves the highest accuracy without meta data on the ANIMAL-10N dataset (Table \ref{tab:animal}). We adopt the same architecture of VGG19 as \citep{song2019selfie,zhangyikai2021learning,chen2021boosting}. To build the meta set, We first train the VGG19 for 100 epochs in a standard manner. We then use this network to select clean samples with the top 1,000 smallest empirical losses as meta data and carefully balance the number (100) for each class. Once we split the original training set into the noisy training set and meta set, we meta-learn a new VGG19 network from \textit{scratch} via VRI for evaluation.

\section{Conclusion}\label{sec:conclusions}
In this work, we propose variational rectification inference (VRI) for learning with label noise to tackle model collapse in the MC meta-learning method. VRI is built as a hierarchical Bayes to estimate the conditional predictive distribution and formulated as the variational inference problem. To achieve adaptively rectifying the loss with noisy labels, we design a meta-network, which is endowed with the ability to exploit information lying in the feature space. Our method can also meta-learn the rectifying process via bi-level programming, whose convergence can be theoretically guaranteed. To evaluate the effectiveness of VRI, we conduct extensive experiments on varied noise types and achieve competitive performance on those benchmarks. Experimental results demonstrate that VRI outperforms the MC method with low sampling rates, resulting in a more efficient learning process. To further boost our framework, we integrate the adaptive sample strategy into VRI and obtain comparable performance without meta data, beyond the common setting of existing meta-learning methods.

\begin{appendices}
\section{}\label{secA1}

\subsection{Derivations of The ELBO \label{de_ELBO}} 
For a singe observation $(\mathbf{x}, \mathbf{y} )$, the ELBO can be derived from the perspective of the KL divergence between the variational posterior $q_{\phi}(\mathbf{v}| \mathbf{x}, \mathbf{y})$ and the posterior $p(\mathbf{v}| \mathbf{x}, \mathbf{y})$:
 \begin{equation}\label{eq:bayes} 
\small
\begin{aligned}
	& \KL[q_{\phi}(\mathbf{v}| \mathbf{x}, \mathbf{y}) || p(\mathbf{v}| \mathbf{x}, \mathbf{y})]  \\
	& = \mathbb{E}_{q_{\phi}(\mathbf{v}| \mathbf{x}, \mathbf{y})} \left[\log q_{\phi}(\mathbf{v}| \mathbf{x}, \mathbf{y}) - \log p(\mathbf{v}| \mathbf{x}, \mathbf{y})\right]  \\
	& = \mathbb{E}_{q_{\phi}(\mathbf{v}| \mathbf{x}, \mathbf{y})} \left[\log q_{\phi}(\mathbf{v}| \mathbf{x}, \mathbf{y}) - \log \frac{p(\mathbf{v}| \mathbf{x}, \mathbf{y}) p(\mathbf{x}, \mathbf{y})}{p(\mathbf{x}, \mathbf{y})}\right]    \\
	& = \log p(\mathbf{y}| \mathbf{x}) + \mathbb{E}_{q_{\phi}(\mathbf{v}| \mathbf{x}, \mathbf{y})} \big[\log q_{\phi}(\mathbf{v}| \mathbf{x}, \mathbf{y})  \\ 
	  & \quad\quad\quad\quad\quad\quad\quad\quad  - \log p(\mathbf{y}| \mathbf{x}, \mathbf{v}) - \log p(\mathbf{v}| \mathbf{x})  \big] \\
	& = \log p(\mathbf{y}| \mathbf{x}) - \mathbb{E}_{q_{\phi}(\mathbf{v}| \mathbf{x}, \mathbf{y})} \left[\log p(\mathbf{y}| \mathbf{x}, \mathbf{v})\right]  \\ 
	   & \quad\quad\quad\quad\quad\quad\quad\quad  +\KL[q_{\phi}(\mathbf{v}| \mathbf{x}, \mathbf{y}) || p(\mathbf{v}| \mathbf{x})]   \\
	& \geq 0. 
 \end{aligned}
\end{equation}

Specifically, we apply Bayes' rule to derive Eq. (\ref{eq:bayes}) as
 \begin{equation}
\small
\begin{aligned}
	p &(\mathbf{v}| \mathbf{x}, \mathbf{y}) = \frac{p(\mathbf{v}| \mathbf{x}, \mathbf{y}) p(\mathbf{x}, \mathbf{y})}{p(\mathbf{x}, \mathbf{y})}   \\ 
	&= \frac{p(\mathbf{y} | \mathbf{x},\mathbf{v} ) p(\mathbf{x},\mathbf{v})}{p(\mathbf{x}, \mathbf{y})}  = \frac{p(\mathbf{y} | \mathbf{x},\mathbf{v} ) p(\mathbf{v}|\mathbf{x})}{p(\mathbf{y}| \mathbf{x})}.
 \end{aligned}
\end{equation}

%The prior $p(\mathbf{v}| \mathbf{x})$ can be assigned to a certain distribution of $p(\mathbf{v}) = \mathcal{N}(0 ,\text{I})$ as~\citep{zhang2019variational}.
Therefore, the ELBO for the log-likelihood of the predictive distribution in Eq. (\ref{eq:elbo}) can be written as follows
 \begin{equation}
\small
\begin{aligned}
	& \log p(\mathbf{y}| \mathbf{x})  \\
	&  \geq  \mathbb{E}_{q_{\phi}(\mathbf{v}| \mathbf{x}, \mathbf{y})} \left[\log p(\mathbf{y}| \mathbf{x}, \mathbf{v})\right]  - \KL[q_{\phi}(\mathbf{v}| \mathbf{x}, \mathbf{y}) ||  p(\mathbf{v}| \mathbf{x}))]  \\ & \quad  = \mathcal{L}_{\text{ELBO}}.
 \end{aligned}
\end{equation}

\subsection{Proof\label{app:a}}

\noindent \textbf{Lemma \ref{lemma1} (Smoothness)}

\begin{proof}

We begin with computation of the derivation of the meta loss $\widetilde{\mathcal{L}}^{emp}(\hat{\theta}) $ \textit{w.r.t.} the meta-network $\phi$. By using Eq. (\ref{eq:theta_step}), we have

 \begin{equation}\label{1storder}
\small
\begin{aligned}
		 & \frac{\partial  \mathcal{L}^{meta}(\hat{\theta})}{\partial \phi}   = \frac{\partial \mathcal{L}^{meta}(\hat{\theta})}{\partial \hat{\theta}}  \frac{\partial \hat{\theta}}{\partial V(\phi)}   \frac{\partial V(\phi) }{\partial \phi} \\
		& \qquad \quad  =\alpha \frac{\partial \mathcal{L}^{meta}(\hat{\theta})}{\partial \hat{\theta}}  \left( \nabla_\theta L(\theta) + \frac{\partial  \KL }{\partial V(\phi)} \right)  \frac{\partial V(\phi) }{\partial \phi}.
 \end{aligned}
\end{equation}
To simplify the proof, we neglect Monte Carlo estimation in Eq. \ref{eq:obj1} and consider it as a deterministic rectified vector in the following. This would not affect the result since there ultimately exists a rectified vector for computing the expectation of those sampled losses.
Taking the gradient of $\phi$ on both side of Eq. (\ref{1storder}),
\begin{equation}
\small
\begin{aligned}
	 &\frac{\partial^2  \mathcal{L}^{meta}(\hat{\theta})}{\partial \phi^2}  \\ &\quad =  \underbrace{\alpha  \frac{\partial }{\partial \phi} \left( \frac{\partial \mathcal{L}^{meta}(\hat{\theta})}{\partial \hat{\theta}}  \left( \nabla_\theta L(\theta) + \frac{\partial  \KL }{\partial V(\phi)} \right) \right) \frac{\partial V(\phi) }{\partial \phi}}_{\text{\ding{182}}} \\ & + \underbrace{ \alpha \frac{\partial \mathcal{L}^{meta}(\hat{\theta})}{\partial \hat{\theta}}  \left( \nabla_\theta L(\theta) + \frac{\partial  \KL }{\partial V(\phi)} \right)  \frac{\partial^2 V(\phi) }{\partial \phi ^2} }_{\text{\ding{183}}}.    \end{aligned}
\end{equation}
For the first term \ding{182} in the right hand, we can obtain the following inequality \textit{w.r.t.} its norm
\begin{equation}
\small
\begin{aligned}
		\|\text{\ding{182}}\|  & \le  \alpha \delta  \left\|     \frac{\partial }{\partial \hat{\theta}} \left( \frac{\partial \mathcal{L}^{meta}(\hat{\theta})}{    \partial \phi  } \right) \left( \nabla_\theta L(\theta) + \frac{\partial  \KL }{\partial V(\phi)}  \right)   \right \|  \\ 
		& = \alpha^2 \delta  \Bigg \|     \frac{\partial }{\partial \hat{\theta}} \left(  \frac{\partial \mathcal{L}^{meta}(\hat{\theta})}{\partial \hat{\theta}}  \left( \nabla_\theta L(\theta) + \frac{\partial  \KL }{\partial V(\phi)} \right)  \frac{\partial V(\phi) }{\partial \phi} \right) \\ 
		& \qquad\qquad \quad \left( \nabla_\theta L(\theta) + \frac{\partial  \KL }{\partial V(\phi)}  \right)   \Bigg  \|  \\
		& = \alpha^2 \delta  \Bigg \|     \frac{\partial^2 \mathcal{L}^{meta}(\hat{\theta})}{\partial \hat{\theta}^2}  \left( \nabla_\theta L(\theta) + \frac{\partial  \KL }{\partial V(\phi)} \right)^2  \frac{\partial V(\phi) }{\partial \phi} \Bigg  \| \\
		& \le \ell \alpha^2 \delta^2  (\tau + o)^2,  
     \end{aligned}
\end{equation}
since we assume $ \|  \frac{\partial^2 \mathcal{L}^{meta}(\hat{\theta})}{\partial \hat{\theta}^2} \|  \le \ell$,  $ \|  \nabla_\theta L(\theta) \|  \le \tau$,  $ \|  \frac{\partial  \KL }{\partial V(\phi)}  \|  \le o$, and $ \| \frac{\partial V(\phi) }{\partial \phi}   \|  \le \delta$. 

For the second term \ding{183}, we can also obtain
\begin{equation}
\small
\begin{aligned}
		\|\text{\ding{183}}\|  & \le  \alpha \tau (\tau + o) \zeta  
   \end{aligned}
\end{equation}
with the assumption $ \|  \frac{\partial^2 V(\phi) }{\partial \phi ^2} \| \le \zeta$.
Therefore, we have 
\begin{equation}
\small
\begin{aligned}
		\left \| \frac{\partial^2  \mathcal{L}^{meta}(\hat{\theta})}{\partial \phi^2}  \right\|   \le \alpha(\tau+o)\left( \ell \alpha \delta^2(\tau+o) + \tau \zeta   \right).  \end{aligned}
\end{equation}
Let $\hat{\ell} = \alpha(\tau+o)\left( \ell \alpha \delta^2(\tau+o) + \tau \zeta \right)$, we can conclude the proof that 
\begin{equation} 
\small
\begin{aligned}
  \|  \mathcal{L}^{meta}(\hat{\theta}(\phi^{(t+1)})) -  \mathcal{L}^{meta}(\hat{\theta}(\phi^{(t)}))     \|  
\le  \hat{\ell}   \|   \phi^{(t+1)} - \phi^{(t)}      \|.  
 \end{aligned}
\end{equation}

\end{proof}

\noindent \textbf{Theorem \ref{th1} (Convergence Rate)}

\begin{proof}
Consider 
%\begin{align}\label{decom}
%	\begin{split}
\begin{equation}\label{decom}
\small
\begin{aligned}
		&\mathcal{L}^{meta}(\hat{\theta}^{(t+1)}(\phi^{(t+1)}))-\mathcal{L}^{meta}(\hat{\theta}^{(t)}(\phi^{(t)})) \\
		&= \underbrace{\mathcal{L}^{meta}(\hat{\theta}^{(t+1)}(\phi^{(t+1)}))- \mathcal{L}^{meta}(\hat{\theta}^{(t)}(\phi^{(t+1)}))}_{\text{\ding{184}}}
		\\
		&\qquad+\underbrace{\mathcal{L}^{meta}(\hat{\theta}^{(t)}(\phi^{(t+1)}))-\mathcal{L}^{meta}(\hat{\theta}^{(t)}(\phi^{(t)}))}_{\text{\ding{185}}}. 
   \end{aligned}
\end{equation}
  %\nonumber
	%\end{split}
%\end{align}

For \ding{184}, by Lipschitz smoothness of the meta loss function for $\theta$, we have 
\begin{equation}
\small
\begin{aligned}
	&\mathcal{L}^{meta}(\hat{\theta}^{(t+1)}(\phi^{(t+1)}))- \mathcal{L}^{meta}(\hat{\theta}^{(t)}(\phi^{(t+1)}))   \\
	&\leq \langle \nabla \mathcal{L}^{meta}(\hat{\theta}^{(t)}(\phi^{(t+1)})), \hat{\theta}^{(t+1)}(\phi^{(t+1)})-\hat{\theta}^{(t)}(\phi^{(t+1)}) \rangle   \\
	&\qquad +\frac{\ell}{2}\|\hat{\theta}^{(t+1)}(\phi^{(t+1)})-\hat{\theta}^{(t)}(\phi^{(t+1)})\|_2^2 .
 \end{aligned}
\end{equation}

We firstly write $\hat{\theta}^{(t+1)}(\phi^{(t+1)}), \hat{\theta}^{(t)}(\phi^{(t+1)})$  with Eq. (\ref{eq:theta_step}). Using Eq. (\ref{eq:theta_update}), we obtain 
\begin{equation}
\small
\begin{aligned}
	\hat{\theta}^{(t+1)}&(\phi^{(t+1)}) -  \hat{\theta}^{(t)}(\phi^{(t+1)}) \\ &=  - \alpha  \nabla_\theta \mathcal{L}^{emp}(\hat{\theta}^{(t+1)}(\phi^{(t+1)})). 
 \end{aligned}
\end{equation}
and
\begin{equation}
\small
\begin{aligned}
	\|  \mathcal{L}^{meta}&(\hat{\theta}^{(t+1)}(\phi^{(t+1)}))- \mathcal{L}^{meta}(\hat{\theta}^{(t)}(\phi^{(t+1)})) \|   \\ 
	&\leq \alpha_t \tau^2+ \frac{\ell \alpha_t^2}{2} \tau^2 = \alpha_t\tau^2 (1+\frac{\alpha_t \ell}{2}),
 \end{aligned}
\end{equation}
since $\left\|\frac{\partial L(\theta)}{\partial \theta}\Big|_{\theta^{(t)}}\right\|\leq \tau$, $\left\|\!\frac{\partial L_i^{meta} (\hat{\theta})}{\partial \hat{\theta}}\!\Big|_{\hat{\theta}^{(t)}}\!\right\|\!\!\leq\!\! \tau$, and the output of $V(\cdot)$ is bounded with the sigmoid function.

For \ding{185}, since the gradient is computed from a mini-batch of training data that is drawn uniformly, we denote the bias of the stochastic gradient $\varepsilon^{(t)} = \nabla \widetilde{\mathcal{L}}^{meta}\left(\hat{\theta}^{(t)}\left(\phi^{(t)}\right)\right)- \nabla\mathcal{L}^{meta}\left(\hat{\theta}^{(t)}\left(\phi^{(t)}\right)\right) $. We then observe its expectation obeys $\mathbb{E}[\varepsilon^{(t)}]  = 0$ and its variance obeys $\mathbb{E}[\| \varepsilon^{(t)}\|_2^2 ] \leq \sigma^2 $. 

By smoothness of $\nabla\mathcal{L}^{meta}(\hat{\theta}^{(t)}(\phi))$ for $\phi$ in Lemma \ref{lemma1}, we have
\begin{equation}
\small
\begin{aligned}
	&\mathcal{L}^{meta}(\hat{\theta}^{(t)}(\phi^{(t+1)}))-\mathcal{L}^{meta}(\hat{\theta}^{(t)}(\phi^{(t)}))  \\ 
	& \leq \langle\nabla \mathcal{L}^{meta}(\hat{\theta}^{(t)}(\phi^{(t)})),\phi^{(t+1)}-\phi^{(t)} \rangle + \frac{\hat{\ell}}{2} \|\phi^{(t+1)}-\phi^{(t)}\|_2^2   \\ 
	& = \langle\nabla \mathcal{L}^{meta}(\hat{\theta}^{(t)}(\phi^{(t)})), -\eta_t [\nabla \mathcal{L}^{meta}(\hat{\theta}^{(t)}(\phi^{(t)}))+\varepsilon^{(t)} ] \rangle \\
	& \quad + \frac{\hat{\ell} \eta_t^2}{2} \|\nabla \mathcal{L}^{meta}(\hat{\theta}^{(t)}(\phi^{(t)}))+\varepsilon^{(t)}\|_2^2   \\
	& = -(\eta_t-\frac{\hat{\ell} \eta_t^2}{2}) \|\nabla \mathcal{L}^{meta}(\hat{\theta}^{(t)}(\phi^{(t)}))\|_2^2 + \frac{\widetilde{\ell} \eta_t^2}{2}\|\varepsilon^{(t)}\|_2^2  \\
	& \quad- (\eta_t-\hat{\ell} \eta_t^2)\langle \nabla \mathcal{L}^{meta}(\hat{\theta}^{(t)}(\phi^{(t)})),\varepsilon^{(t)}\rangle.
 \end{aligned}
\end{equation}
Thus, Eq.(\ref{decom}) satisfies
\begin{equation}
\small
\begin{aligned}
    \label{decomieq}
		&\mathcal{L}^{meta}(\hat{\theta}^{(t+1)}(\phi^{(t+1)}))-\mathcal{L}^{meta}(\hat{\theta}^{(t)}(\phi^{(t)})) \\ 
		&\leq \alpha_t\tau^2 (1+\frac{\alpha_t \ell}{2}) -(\eta_t-\frac{\hat{\ell}\eta_t^2}{2}) \|\nabla \mathcal{L}^{meta}(\hat{\theta}^{(t)}(\phi^{(t)}))\|_2^2 \\
		&\quad + \frac{\hat{\ell}\eta_t^2}{2}\|\varepsilon^{(t)}\|_2^2 - (\eta_t-\hat{\ell} \eta_t^2)\langle \nabla \mathcal{L}^{meta}(\hat{\theta}^{(t)}(\phi^{(t)})),\varepsilon^{(t)}\rangle.
 \end{aligned}
\end{equation}

We take the expectation \textit{w.r.t.} $\varepsilon^{(t)}$ over Eq. (\ref{decomieq}) and sum up $T$ inequalities. By the property of the bias $\varepsilon^{(t)}$, we can obtain
\begin{equation}
\small
\begin{aligned}
    \sum\limits_{t = 1}^T & \left(\mathop{ \mathbb{E}}_{\varepsilon^{(t)}} \mathcal{L}^{meta}(\hat{\theta}^{(t+1)}(\phi^{(t+1)}))-\mathop{ \mathbb{E}}_{\varepsilon^{(t)}}\mathcal{L}^{meta}(\hat{\theta}^{(t)}(\phi^{(t)}))\right) \\
	&\leq  \tau^2  \sum\limits_{t = 1}^T \alpha_t(1+\frac{\alpha_t \ell}{2})   \\
	& \quad -  \sum\limits_{t = 1}^T (\eta_t-\frac{\hat{\ell}\eta_t^2}{2})\mathop{ \mathbb{E}}_{\varepsilon^{(t)}} \left[ \|\nabla \mathcal{L}^{meta}(\hat{\theta}^{(t)}(\phi^{(t)}))\|_2^2\right]    \\
	& \quad + \frac{\hat{\ell} \sigma^2}{2}  \sum\limits_{t = 1}^T \eta_t^2.
 \end{aligned}
\end{equation}
Taking the total expectation and reordering the terms, we have
\begin{equation}
\small
\begin{aligned}
	&\frac{1}{T}\sum\limits_{t = 1}^T (\eta_t-\frac{\hat{\ell}\eta_t^2}{2}) \mathop{ \mathbb{E}} \left[ \|\nabla \mathcal{L}^{meta}(\hat{\theta}^{(t)}(\phi^{(t)}))\|_2^2 \right]   \\
	&\leq   \frac{\mathcal{L}^{meta}(\hat{\theta}^{(0)}(\phi^{(0)}))  -  \mathop{ \mathbb{E}}\left[ \mathcal{L}^{meta}(\hat{\theta}^{(T+1)}(\phi^{(T+1)}) ) \right] }{T}   \\
	&+  \frac{\tau^2 }{T} \sum\limits_{t = 1}^T \alpha_t(1+\frac{\alpha_t \ell}{2}) + \frac{\hat{\ell} \sigma^2}{2T}  \sum\limits_{t = 1}^T \eta_t^2.
 \end{aligned}
\end{equation}
Let 
\begin{equation}
\small
\begin{aligned}
E=\mathcal{L}^{meta}(\hat{\theta}^{(0)}(\phi^{(0)}))  -  \mathop{ \mathbb{E}}\left[ \mathcal{L}^{meta}(\hat{\theta}^{(T+1)}(\phi^{(T+1)}) ) \right].
 \end{aligned}
\end{equation}
With the assumption of $\eta_t =\min\{\frac{1}{\hat{\ell}},\frac{C}{\sigma\sqrt{T}}\}$ and $\alpha_t=\min\{1,\frac{\kappa}{T}\}$, we have $\eta_t-\frac{\hat{\ell}\eta_t^2}{2} \ge \eta_t - \frac{\eta_t}{2} =  \frac{\eta_t}{2}$ and 
\begin{equation}
\small
    \begin{aligned}
    &\frac{1}{T}\sum\limits_{t = 1}^T  \mathop{ \mathbb{E}} \left[ \|\nabla \mathcal{L}^{meta}(\hat{\theta}^{(t)}(\phi^{(t)}))\|_2^2 \right]  \\
	&\leq   \frac{2E}{T\eta_1} +  \frac{(2+\ell)\tau^2  \alpha_1 }{\eta_1}  + \hat{\ell} \sigma^2 \eta_1  \\
	& =  \frac{2E}{T} \max\{\hat{\ell}, \frac{\sigma\sqrt{T}}{C} \}   +  (2+\ell) \tau^2  \min\{1,\frac{\kappa}{T}\} \max\{\hat{\ell}, \frac{\sigma\sqrt{T}}{C} \}  \\
	& \quad+ \hat{\ell} \sigma^2 \min\{\frac{1}{\hat{\ell}},\frac{C}{\sigma\sqrt{T}}\}  \\
	& \leq  \frac{2\sigma E}{C\sqrt{T}}    +  \frac{ (2+\ell)\tau^2 \kappa \sigma}{C \sqrt{T} } + \frac{C\hat{\ell} \sigma^2}{\sigma\sqrt{T}} = \mathcal{O}(\frac{1}{\sqrt{T}}). r
    \end{aligned}
\end{equation}
Thus, we conclude our proof.

\end{proof}

\subsection{Algorithm for VRI without the meta set\label{app:b}}
\begin{algorithm}[h]
\small
\caption{Learning without meta data}
\label{alg:2}
\begin{algorithmic}[1]
\REQUIRE 
Training set $\mathcal{D}_N$, number of meta samples $M$, batch size $n, m$, outer iterations $T$ for each epoch, sampling number $k$, step size $\alpha$, $\eta$, warming-up epoch $K$, training epoch $C$
\ENSURE Optimal $\theta^*$ \\
\STATE  Initialize parameters $\theta^{(0)}$, $\phi^{(0)}$, and $\omega^{(0)}$ \\
\STATE  Warm up parameters $\theta$ for $K$ epochs \\
\FOR{$c \in \{1,\dots,C\}$}
  \STATE $ \mathcal{D}_M = \text{SelectWithBalance}(\mathcal{D}_N, M)$
  \FOR{$t \in \{1,\dots,T\}$}
        \STATE $ \text{SampleBatch}(\mathcal{D}_N, n), \text{SampleBatch}(\mathcal{D}_M, m)$\\
        \STATE Form learning process of $\hat{\theta}^{(t)}(\phi, \omega)$   \\
        \STATE Optimize $\phi^{(t)}$ with $\hat{\theta}^{(t)}(\phi)$  \\ 
        \STATE Optimize $\omega^{(t)}$ with $\hat{\theta}^{(t)}(\omega)$  \\ 
        \STATE Optimize $\theta^{(t)}$ using the updated $\phi^{(t+1)}$  \\ 
  \ENDFOR \\
\ENDFOR \\
\end{algorithmic}
\end{algorithm}

\end{appendices}

\section*{Declarations}

\begin{itemize}
\item \textbf{Funding} This research was supported by Young Expert of Taishan Scholars in Shandong Province (No. tsqn202312026), Natural Science Foundation of China (No. 62106129, 62176139, 62276155), Natural Science Foundation of Shandong Province (No. ZR2021QF053, ZR2021ZD15) %and Chongqing Overseas Chinese Entrepreneurship and Innovation Support Program, and CAAI-Huawei MindSpore Open Fund.
\item \textbf{Conflict of interest} The author declares that he has no confict of interest. 
\item \textbf{Ethics approval} Not applicable.
\item \textbf{Consent to participate} Not applicable.
\item \textbf{Consent for publication} Not applicable.
\item \textbf{Availability of data and materials} Not applicable.
\item \textbf{Code availability} The code is now available at {\url{https://github.com/haolsun/VRI}}.
\item \textbf{Authors' contributions} 
H-Sun conceptualized the learning problem and provided the main idea. He also drafted the article.
Q-Wei completed main experiments and provided the analysis of experimental results.
L-Feng provided the theoretical guarantee for the learning algorithm.
F-Liu and H-Fan contributed to participating in discussions of the algorithm and experimental designs.
Y-Hu and Y-Yin provided funding supports, and Y-Hu approved the final version of the article.
\end{itemize}

\bibliography{sn-bibliography}% common bib file

\end{document}